\documentclass[10pt,journal,compsoc]{IEEEtran}
\usepackage{colortbl}
\usepackage{graphicx}
\usepackage{epstopdf}
\usepackage{amsmath,bm}
\usepackage{amsthm}
\usepackage{amssymb}
\usepackage{amsfonts}
\usepackage{algorithmic}
\usepackage{algorithm}
\usepackage{array}
\usepackage{multirow}
\usepackage{booktabs}
\usepackage{color}
\usepackage{float}
\usepackage{subfig}
\usepackage{makecell}
\usepackage{threeparttable} 
\usepackage{cases}
\usepackage{ragged2e}
\usepackage[colorlinks]{hyperref}
\usepackage{xr}
\usepackage{soul}
\usepackage[switch,pagewise]{lineno}
\soulregister\cite7
\soulregister\ref7 
\makeatletter

\newcommand*{\addFileDependency}[1]{
  \typeout{(#1)}
  \@addtofilelist{#1}
  \IfFileExists{#1}{}{\typeout{No file #1.}}
}

\makeatother


\newtheorem{theorem}{Theorem}
\newtheorem{proposition}[theorem]{Proposition}%
\newtheorem{remark}{Remark}%
\newtheorem{definition}{Definition}%

\setlength{\belowcaptionskip}{-0.1cm}
\begin{document}

\title{IG\textsuperscript{2}: Integrated Gradient on Iterative Gradient Path for Feature Attribution}
\author{Yue Zhuo, Zhiqiang Ge, \IEEEmembership{Senior Member,~IEEE}

  \IEEEcompsocitemizethanks{
    \IEEEcompsocthanksitem This paper has been accepted by IEEE Transactions on Pattern Analysis and Machine Intelligence.
    \IEEEcompsocthanksitem This work was supported in part by the National Natural Science Foundation of China (NSFC) (92167106, 62103362 and 61833014) and the Natural Science Foundation of Zhejiang Province (LR18F030001). \textit{(Corresponding author: Zhiqiang Ge)}
    \IEEEcompsocthanksitem Yue Zhuo is with the State Key Laboratory of Industrial Control Technology, College of Control Science and Engineering, Zhejiang University, Hangzhou, 310027, China. Zhiqiang Ge is with Peng Cheng Laboratory, Shenzhen, 518000, China. (E-mail: zhuoy1995@zju.edu.cn, zhiqiang.ge@hotmail.com)}}
    
\IEEEtitleabstractindextext{\begin{abstract}
    Feature attribution explains Artificial Intelligence (AI) at the instance level by providing importance scores of input features' contributions to model prediction. Integrated Gradients (IG) is a prominent path attribution method for deep neural networks, involving the integration of gradients along a path from the explained input (explicand) to a counterfactual instance (baseline). Current IG variants primarily focus on the gradient of explicand's output. However, our research indicates that the gradient of the counterfactual output significantly affects feature attribution as well. To achieve this, we propose \underline{I}terative \underline{G}radient path \underline{I}ntegrated \underline{G}radients (IG\textsuperscript{2}), considering both gradients. IG\textsuperscript{2} incorporates the counterfactual gradient iteratively into the integration path, generating a novel path (\emph{GradPath}) and a novel baseline (\emph{GradCF}). These two novel IG components effectively address the issues of attribution noise and arbitrary baseline choice in earlier IG methods. IG\textsuperscript{2}, as a path method, satisfies many desirable axioms, which are theoretically justified in the paper. Experimental results on XAI benchmark, ImageNet, MNIST, TREC questions answering, wafer-map failure patterns, and CelebA face attributes validate that IG\textsuperscript{2} delivers superior feature attributions compared to the state-of-the-art techniques. The code is released at: \url{https://github.com/JoeZhuo-ZY/IG2}.
  \end{abstract}
  \begin{IEEEkeywords}
    Feature Attribution, Integrated Gradient, eXplainable Artificial Intelligence (XAI), Counterfactual Explanation
  \end{IEEEkeywords}}

\maketitle

\IEEEraisesectionheading{\section{Introduction}\label{intro}}

\par \IEEEPARstart{A}{I} models are becoming increasingly prevalent in critical fields, such as industrial control and biomedical analysis. Consequently, the need for research into their explainability (XAI) has become urgent. This is essential to keep humans in the loop and help people understand, explain, and control the models~\cite{surveyXAI,9241434,9335497}. Given an input instance (e.g., an image), feature attribution for deep neural networks quantifies the contributions of individual features, such as pixels, to the model output. These results can support the users in reasoning which input elements drive model predictions.

\par The gradient is a basic form of feature attribution that analogizes the model's coefficients for a deep network. Early local gradient methods such as Vanilla Gradient~\cite{saliencymaps1}, Grad-CAM~\cite{Gradcam}, and Guided Backpropagation~\cite{Guided_BP} suffer from the gradient saturation, a problem that the gradients in the input neighborhood are misleading~\cite{DeepLIFT,highcurvate}. Recently, for solving this problem, Integrated Gradients (IG)~\cite{IG} was proposed as a path method that integrates gradient (of model output) along a path between the explained instance (i.e., explicand) and baseline. 


\par IG methods introduce the concept of counterfactual explanation, which contrastively explain the models by answering: ``Which features cause the model output prediction A (of explicand) rather than counterfactual prediction B (of baseline)\footnotemark ?'' From the perspectives of philosophy and psychology, the counterfactuals align with human cognition to explain unexpected events~\cite{CFE,CF_GDPR}, and have been implied in many attribution methods, such as SHapley Additive exPlanations~\cite{SHAP}, DeepLIFT~\cite{DeepLIFT}, SCOUT~\cite{Wang_2020_CVPR}, and sub-region interpretation~\cite{chen2024Less}.
\footnotetext{Note that the terms \emph{counterfactuals}~\cite{CF_GDPR}, \emph{contrastive facts}~\cite{DBLP:journals/corr/abs-1802-07623}, and \emph{counter (class) example}~\cite{10.1145/3563039} are used interchangeably in prior explanation works. In the context of Shapley value~\cite{SHAP} and path methods~\cite{IG}, they are commonly referred to as \emph{background} and \emph{baseline}.}

\par The focus of this study is on IG, a well-known path (attribution) method for deep neural networks. Path methods, rooted in Aumann-Shapley game theory, adhere to many describable axioms~\cite{sundararajan2020many,IG}. IG's attribution performance depends on two essential hyperparameters: path and baseline. The conventional approach, Vanilla IG, typically employs a zero baseline (e.g., all-black image) along with a straight-line path. However, this choice is arbitrary and agnostic to the model and explicand, leading to several shortcomings. For instance, the straight-line path can introduce noise into the attribution due to the saturation effect~\cite{Saturation} and the use of a black baseline will result in incomplete attributions~\cite{EG,baselines}.

\begin{figure*}[t]
  \centering
  \includegraphics[width=0.80\textwidth]{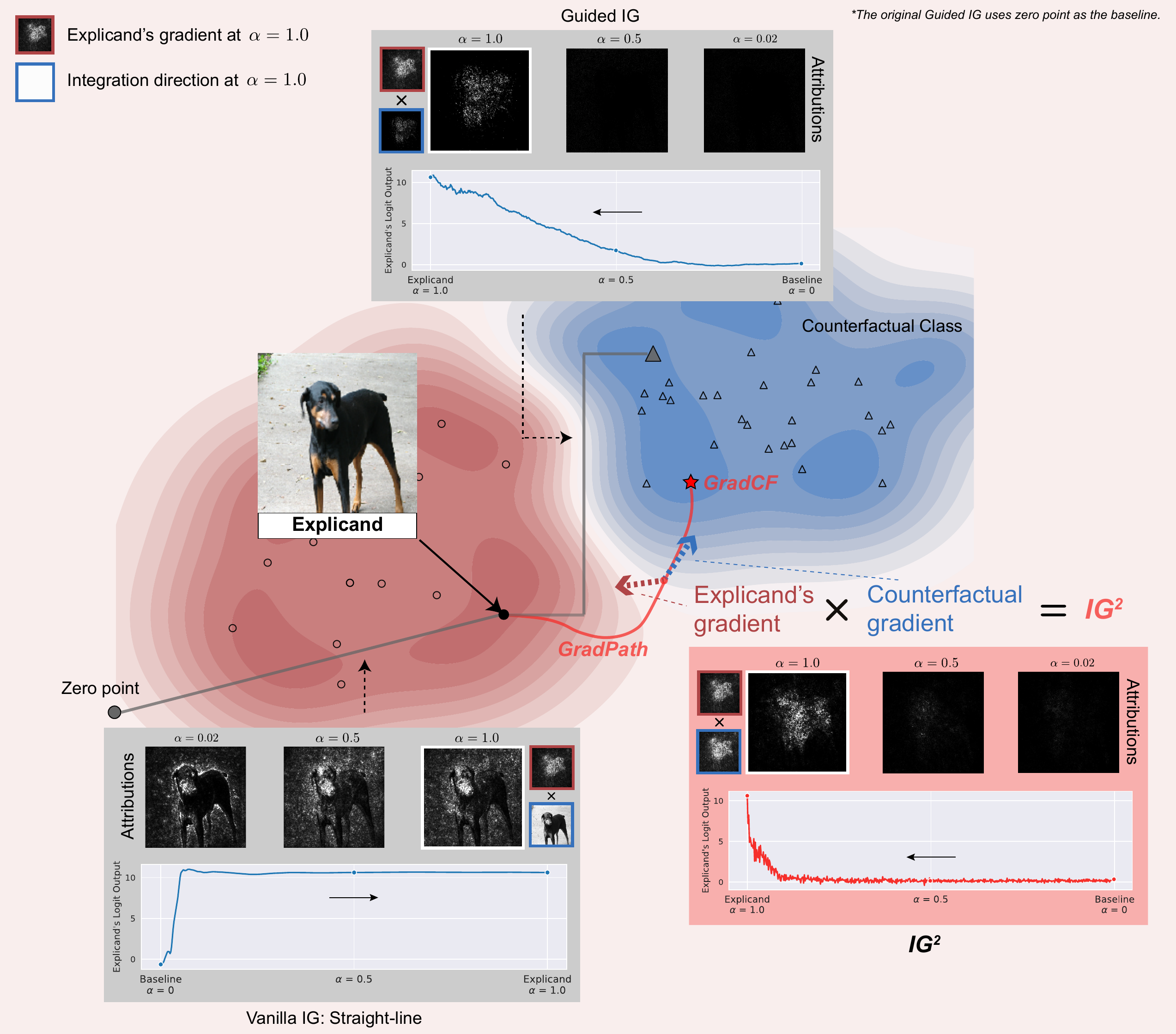}
  \caption{\textbf{Illustration of IG\textsuperscript{2} with GradPath and GradCF}, compared with Vanilla IG (with a zero baseline) and Guided IG (counterfactual baseline). The explicand is a sample [Doberman] from ImageNet~\cite{ILSVRC15} classified by Inception-v3~\cite{Inceptionv3}. The graphs of logit predictions for [Doberman] w.r.t. $\alpha$ value are plotted. The attributions snapshots of each method are shown at $\alpha$ values of 0.02, 0.5 and 1.0. At $\alpha=1.0$, the attributions are decomposed into two multipliers: explicand's gradient (small images in red boxes) and integration path direction (in blue boxes). Integration path of IG\textsuperscript{2} optimally aligns the explicand's gradient with the counterfactual gradient. This alignment results in feature attributions that are less noisy and more complete (on the body of Doberman). Comprehensive attribution results on ImageNet can be found in Section \ref{sec_image_exp}.}
  \label{Paths_comparison}
\end{figure*}

\par Recently, with different paths and baselines, many variant path methods have been proposed for improving attributions. For integration paths, Guided IG~\cite{guided} adaptively chooses the path by selecting features with the smallest partial derivatives; Blur IG~\cite{9157016} integrated the gradients on the gradually blurred image path. For baselines, Expected IG~\cite{EG} sampled the baselines from the data distribution; Sturmfels et al.~\cite{baselines} discussed different baselines' impacts on the path methods.
\par  Notably, despite these advancements, the majority of existing baselines and paths are considered model-agnostic and explicand-agnostic. We believe the excellent baseline and path should contain the information of both explicand and model, which motivates the idea of IG\textsuperscript{2}. Table \ref{Summary} summarizes the existing IG-based methods from paths and baselines, and Section \ref{relat_pathmethod} contrasts them with our proposal in detail.


\par \textbf{IG\textsuperscript{2}:} Fig. \ref{Paths_comparison} depicts IG\textsuperscript{2}. Starting from the explicand, IG\textsuperscript{2} iteratively searches the instances in the descent direction of the counterfactual gradient, minimizing the representation distance between the explicand and counterfactual. The set of all samples searched at each step is denoted as GradPath, and its endpoint is the baseline GradCF, a counterfactual (CF) example. As the name suggests, IG\textsuperscript{2} multiplies two gradients during the integration: one of explicand's prediction and another of the counterfactual class, the latter of which is implied in the GradPath. IG\textsuperscript{2} provides superior attributions over existing techniques, due to the distinctions on two essentials of path methods: the integration path and baseline.

\par GradPath in IG\textsuperscript{2} effectively mitigates saturation effects~\cite{guided,Saturation} (in Definition \ref{def_sat}). This is achieved by its alignment with the counterfactual gradients, leading to a rapid decrease in the model's prediction of the explicand. Fig. \ref{Paths_comparison} show the merit of GradPath by the attributions at $\alpha=1$, where three path share an identical explicand's gradient. The GradPath, by directing itself towards the salient features that distinguish the explicand and counterfactual, which ``filters'' the noise in explicand's gradient by path integration. In contrast, the straight-line in Vanilla IG is on the dissimilar direction to the explicand's gradient, causing the noise in image background to accumulate along the integration path, leading to saturation effects; Guided IG shares a similar idea with IG\textsuperscript{2} but its path direction is constrained, which results in less complete attributions than IG\textsuperscript{2}.

\par GradCF is a novel baseline proposed in IG\textsuperscript{2}, and its advantages are illustrated in Fig. \ref{baseline_comparison}, contrasting the explicand with different baselines. GradCF can significantly highlight the critical features. For [Doberman] example, GradCF accurately highlights the dog's body with the counterfactual gradients. On the contrary, using a black baseline disregards the black pixels on dog and adversely attributes to the white background instead. While Expected IG employs counterfactual data as the baseline to address this issue, but this naive contrast in the input space remains irrelevant to the explicand, resulting in noises~\cite{baselines}. For other two datasets, GradCF also precisely highlights the key features, distinguishing between digital 5 and digital 6, as well as the defective wafer map in contrast to normal ones.

\begin{figure}[t]
  \setlength{\belowcaptionskip}{-0.4cm}
  \centering
  \includegraphics[width=0.42\textwidth]{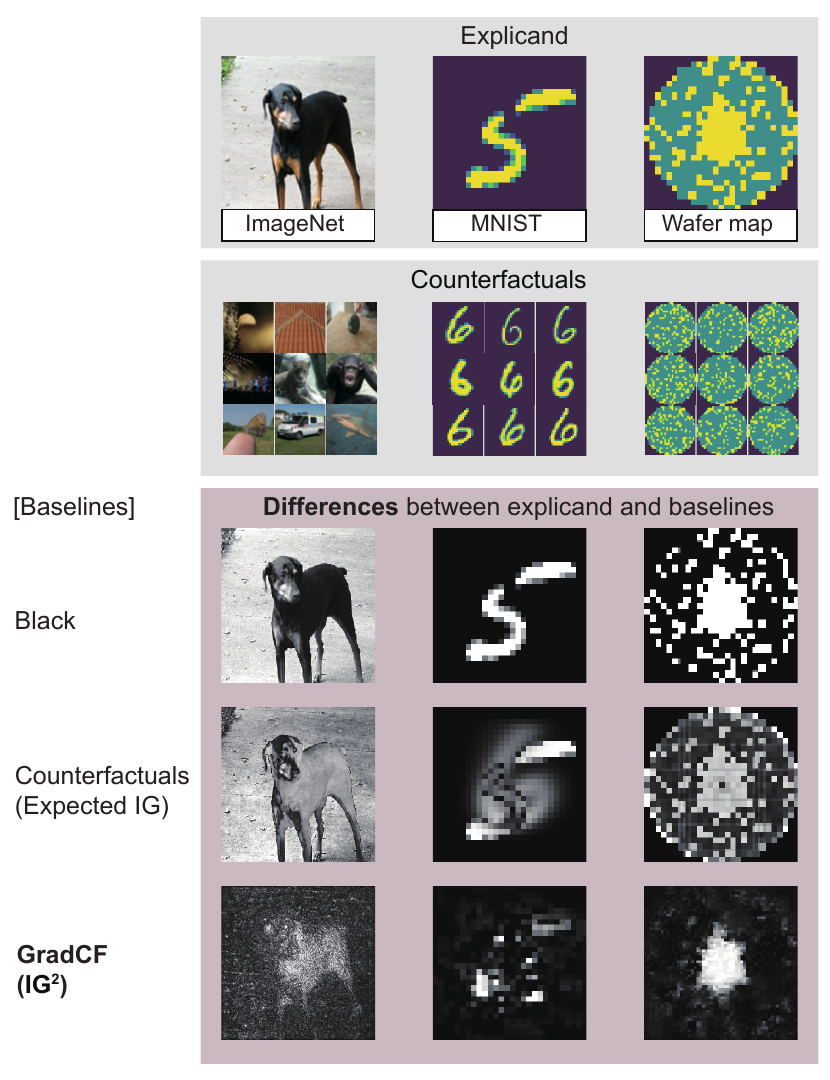}
  \caption{\textbf{Differences between explicand and three baselines}: the black image, counterfactuals and GradCF. Three samples from ImageNet, MNIST~\cite{6932449} and wafer map~\cite{6932449} datasets are plotted. For the non-black baselines of counter class, the ImageNet sample is contrasted with randomly sampled images, MNIST's digital 5 is contrasted with digitals 6, and the central failure wafer map is contrasted with normal wafer maps. The critical features in explicands are accurately highlighted by contrasting with GradCF. Detailed discussions of GradCF on MNIST examples are in Section \ref{interp_gradcf}.  }
  \label{baseline_comparison}
\end{figure}



\par In this study, our primary contribution is the introduction of IG\textsuperscript{2}, a novel path attribution method. IG\textsuperscript{2} comprises both a novel baseline, GradCF, and a novel integration path, GradPath. To the best of our knowledge, it is the first time that counterfactual gradients are integrated into the path attribution methods. Through extensive experiments on datasets from diverse domains, both qualitative and quantitative results consistently demonstrate IG\textsuperscript{2}'s superiority over existing state-of-the-art attribution methods. Furthermore, we substantiate the individual effectiveness of GradCF and GradPath through an ablation study.

\par The remainder of the paper is organized as: Section \ref{pre_path} introduces the preliminary about path methods; Section \ref{method} demonstrates our proposal in detail; Section \ref{Interpreting} gives a deep insight into IG\textsuperscript{2} and theoretically justify its axioms; Section \ref{relat} contrasts our method with related works across different fields; Section \ref{exper} presents experimental results to verify the attribution performances of IG\textsuperscript{2}; Section \ref{imple_detail} presents the IG\textsuperscript{2} implementation details; Section \ref{conclusion} concludes the paper.

\renewcommand{\arraystretch}{1.3}
\setlength\tabcolsep{3pt}
\begin{table}[]
  \centering
  \caption{Summary of existing path methods from the aspects of path and baseline}
  \label{Summary}
  \begin{threeparttable}
    \begin{tabular}{@{}llclcc@{}}
      \toprule
      Methods                               & Path                                                                 & M-s\tnote{$\dagger$} & Baseline           & E-s\tnote{$\S$} & M-s\tnote{$\dagger$} \\ \midrule

      \multirow{3}{*}{\cite{baselines}*}    & \multirow{3}{*}{straight line}                                       & \multirow{4}{*}{}    & maximal distance   & \checkmark      &                      \\
                                            &                                                                      &                      & noised data        & \checkmark      &                      \\
                                            &                                                                      &                      & uniform noise      &                 &                      \\

      Vanilla IG~\cite{IG}                  & straight line                                                        &                      & zero vector        &                 &                      \\
      Expected IG~\cite{EG}                 & straight line                                                        &                      & train data         &                 &                      \\
      XRAI~\cite{XRAI}                      & straight line                                                        &                      & black+white images &                 &                      \\
      Blur IG~\cite{9157016}                & blur path                                                            &                      & blurred image      & \checkmark      &                      \\
      Guided IG~\cite{guided}               & \begin{tabular}[c]{@{}l@{}}straight line's\\ projection\end{tabular} & \checkmark           & zero vector        &                 &                      \\
      \textbf{IG\textsuperscript{2} (ours)} & GradPath                                                             & \checkmark           & GradCF             & \checkmark      & \checkmark           \\ \bottomrule
    \end{tabular}

    \begin{tablenotes}
      \footnotesize
      \item[*] Work~\cite{baselines} discussed all the other baselines in Vanilla IG, Expected IG, XRAI, Blur IG, and Guided IG (except ours), which is not listed for clarity.
      \item[$\dagger$] M-s (Model-specific): The path (or baseline) is specifically designed for explained models.
      \item[$\S$] E-s (Explicand-specific): The baseline is specifically designed for explained sample.
    \end{tablenotes}
  \end{threeparttable}

\end{table}

\section{Preliminary of path methods}
\label{pre_path}
\par Path methods are based on the Aumann-Shapley theory from cost-sharing with many desirable properties. Given input instances of $n$ dimension $x \in \mathbb{R}^n$, the path of gradient integral is formally defined as $\gamma(\alpha)$ for $\alpha\in[0,1]$. Path $\gamma(\alpha):[0,1] \rightarrow \mathbb{R}^n$ consists of a set of points in $\mathbb{R}^n$, from the baseline $x'$ to the explicand $x$ (i.e., $\gamma(0)=x'$ and $\gamma(1)=x$).
\par Given a path $\gamma$ and model $f:\mathbb{R}^n \rightarrow \mathbb{R}$ (in classification models, only considering the output of explicand's class label), path integrated gradient attributes the $i^{th}$ feature by integrating the gradients of the model output w.r.t the $i^{th}$ feature value along the path $\gamma(\alpha)_i$, which is defined as~\cite{IG}:
\begin{align}
  \phi^{Path}_i = \int_{0}^{1} \frac{\partial f(\gamma(\alpha))}{\partial \gamma(\alpha)_i}  \frac{\partial \gamma(\alpha)_i}{\partial \alpha} d\alpha ,
  \label{PathIG}
\end{align}
where the first multiplier is the explicand's gradient of model prediction and the second one is the path direction.
\par \textbf{Baselines:} The choices of baseline $x'$ are various, and there is currently no consensus on which baseline is the best. Work~\cite{baselines} carefully researched the mainstream baselines, and we summarized them in Table \ref{Summary}. The zero (black) instance $x'=\vec{0}$, one (white) instance $x'=\vec{1}$ and train data $x' \sim D_{train}$ are three commonly used baselines.

\par \textbf{Straight-line path:} The commonly used path is the straight line from the explicand $x$ to baseline $x'$, which is specified $\gamma(\alpha)=x'+\alpha\times(x-x')$ for $\alpha\in[0,1]$.
\par In practice, it is intractable to directly compute the integral in Eq. \ref{PathIG}. Instead, a Riemann sum is used for a discrete approximate with $k$ points in the integral interval. On a straight-line path, the IG~\cite{IG} is computed by:
\begin{align}
  \phi^{IG}_i = (x_i-x'_i)\times \sum_{i=1}^k \frac{\partial f(x'+\frac{k}{m}\times (x-x'))}{\partial x_i} \times \frac{1}{m}.
  \label{Riemann}
\end{align}


\begin{figure}[!t]
  \centering
  \includegraphics[width=0.45\textwidth]{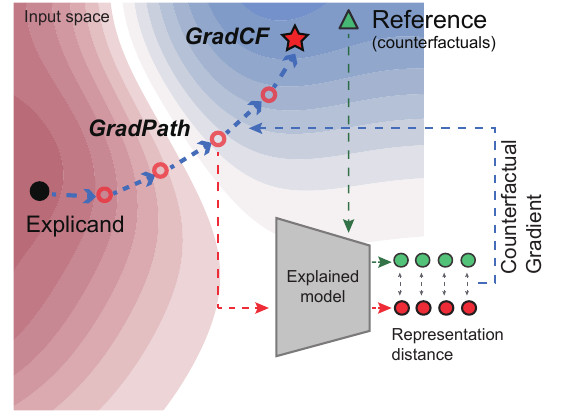}
  \caption{\textbf{Illustration for building GradPath} at each step. From the explicand, the direction of GradPath is iteratively built on the gradient direction for minimizing the model representation distance to the reference. }
  \label{framework}
\end{figure}

\section{Methodologies of IG\textsuperscript{2}}
\label{method}
\par IG\textsuperscript{2} is a path method that extends IG~\cite{IG} by introducing a novel baseline (GradCF) and a novel integration path (GradPath). As the name suggests, IG\textsuperscript{2} accumulates not only the gradient of the explicand's prediction but also the counterfactual gradient, the latter of which is contained in the direction of GradPath. 
\par Given an explicand $x$, IG\textsuperscript{2} first sample a counterfactual reference $x^r$ with the different class label to $x$. We name $x^r$ reference instead of baseline since it is not the integration endpoint of IG\textsuperscript{2}.\footnote{Counterfactual sample $x^r$ directly serves as the baseline in Expected IG~\cite{EG}.}

\begin{definition}
  \emph{(IG\textsuperscript{2} attribution)} Let $x^r$ denote reference, $\gamma^{G}$ denote GradPath, $f$ and $\tilde{f}$ denote the prediction and representation layer of the explained model, IG\textsuperscript{2} attribution for the $i^{th}$ feature of explicand $x$ is defined as:
  \begin{align}
    \phi^{IG^2}_i = \sum_{j=0}^{k-1}  \underbrace{\frac{\partial f(\gamma^{G}(\frac{j}{k}))}{\partial x_i}}_{\text{\emph{explicand's }}} \times \underbrace{ \frac{\partial \Vert \tilde{f}(\gamma^{G}(\frac{j}{k})) - \tilde{f}(x^r) \Vert_2^2 }{ \partial x_i}\times \frac{\eta}{W_j}}_{\text{\emph{counterfactual} }},
    \label{IG2}
  \end{align}
  where $W_j$ is the normalization coefficient and $\eta$ is the step size hyperparameter.
\end{definition}

\par Eq. \ref{IG2} explicitly reveals the nature of IG\textsuperscript{2}: the multiplication of two gradients. Compared to Vanilla IG in Eq. \ref{Riemann}, the major distinction is the Riemann summation weight of explicand's gradients: the weight of counterfactual gradient can highlight the critical features, whereas that in Vanilla IG is a constant. Intuitively, IG\textsuperscript{2} introduces the information of \emph{model representation difference} rather than \emph{naive difference in the input feature space} (i.e., $x_i-x_i'$).

\par Overall, IG\textsuperscript{2} is built on two stages: building GradPath and integrating gradients on GradPath. The following sections respectively introduce them, deriving IG\textsuperscript{2} in Eq. \ref{IG2}.
\renewcommand{\algorithmicrequire}{\textbf{Input:}}
\renewcommand{\algorithmicensure}{\textbf{Output:}}
\begin{algorithm}[t]
  \caption{Compute GradPath and GradCF}\label{algo1}
  \begin{algorithmic}[1]
    \REQUIRE representation layer: $\tilde{f}$

    explicand: $x$

    reference: $x^r$

    step size: $\eta$

    steps: $k$

    \text{ }
    \STATE $ \delta \Leftarrow \vec{0}\ \lhd $ initiate perturbation
    \STATE $\gamma^{G}(1) = x\ \lhd$  initiate GradPath \footnotemark
    \FOR{$j= k-1$ to $0$}
    \STATE $ g = \frac{\partial \Vert \tilde{f}(x+\delta) - \tilde{f}(x^r) \Vert_2 }{ \partial x}$
    \STATE $ W = \Vert g \Vert_{1or2}\ \lhd$  $\ell_1$ or $\ell_2$ norm
    \STATE $ g = \eta\cdot \frac{g}{W} \lhd$  normalized iterative gradient
    \STATE $ \delta = \delta - g \lhd$ update total perturbation
    \STATE $ \gamma^{G}(j/k) = x + \delta\ \lhd$  store the CF at each step
    \ENDFOR

    \text{ }

    \ENSURE GradCF: $\gamma^{G}(0)$,\quad GradPath: $\gamma^{G} $

  \end{algorithmic}
\end{algorithm}


\subsection{Building GradPath}
\par Fig. \ref{framework} illustrates how GradPath is built during the iterative search of GradCF. The motivation of GradCF is to provide a counterfactual explanation: 
\par \emph{Given limited perturbation resource, perturbing which features on explicand $x$ can make the model consider the perturbed explicand to be most similar to the (counterfactual) reference $x^r$? } 
\footnotetext{GradPath is built on the opposite direction to the conventional path integration. To match path methods, we set GradPath $\gamma^{G}(j)$ from $j=1$ to $j=0$, so that the optimization iteration starts at the point $\gamma^{G}(1)$ (explicand) and ends at the point $\gamma^{G}(0)$ (baseline, GradCF).}

\par This similarity can be measured by the distance between two model representations\footnote{We use activations in the penultimate layer as the representation. The choice of representation layer is discussed in Appendix \ref{append_rep}.} and the perturbation search can be converted to a minimization problem. Denoting the network representation by $\tilde{f}$, the perturbation by $\delta$ and Euclidean distance measure by $\Vert\cdot \Vert^2_2$, the optimization objective for GradCF is:
\begin{align}
   & \min_{\delta} \Vert \tilde{f}(x+\delta) - \tilde{f}(x^r)\Vert^2_2.
  \label{obj}
\end{align}

\par We iteratively solve Eq. \ref{obj} using gradient descent with normalization at each step. GradPath $\gamma^{G}$ is built during the iteration by the trajectory of normalized gradient descent, and the endpoint $x+\delta$ is the target counterfactual baseline, GradCF.

\par  Algorithm \ref{algo1} provides the pseudo-code for computing GradCF and GradPath. GradPath can be defined as:

\begin{definition}
  \emph{(GradPath)} Given reference $x^r$ and model representation $\tilde{f}$, GradPath is defined by a discrete function $\gamma^{G}$, on a feasible set $\{0,\frac{1}{k},\cdots,\frac{k-1}{k},1\}$, for $ 0 \le j \le k-1,\ j\in \mathbb{N}$:

  \begin{align}
    \gamma^{G}(\frac{j}{k}) & _{} = \gamma^{G}(\frac{j+1}{k}) - \frac{\partial \Vert \tilde{f}(\gamma^{G}(\frac{j+1}{k})) - \tilde{f}(x^r) \Vert_2 }{ \partial \gamma^{G}(\frac{j+1}{k})}\frac{\eta}{W_j}, \notag \\
    \gamma^{G}(1)           & = x,
    \label{gradpath}
  \end{align}
  where $W_j$ is introduced in Line 5, Algorithm \ref{algo1}.
\end{definition}

\par \textbf{GradCF as explanation (GradCFE)} We can utilize the difference between the GradCF and explicand to provide a counterfactual feature attribution:
\begin{align}
  \mathrm{GradCFE} = x - \gamma^{G}(0).
  \label{CFE}
\end{align}

\subsection{Integrating gradients on GradPath}
\par IG\textsuperscript{2} integrates feature gradients in the same approach as path methods (in Eq. \ref{PathIG}). Since the GradPath $\gamma^{G}$ is a discrete function, which does not have continuous gradients $\frac{\partial \gamma(\alpha)}{\partial \alpha}$, we can operate finite difference to approximate the path gradient. Both forward difference and backward difference are feasible. According to Eq. \ref{gradpath}, the path gradient of GradPath w.r.t $\alpha=\frac{j}{k}$ can be computed by both two differences:
\begin{align}
  \label{difference}
  \frac{\partial \gamma^G(\frac{j}{k})}{\partial \frac{j}{k}} = \begin{cases}
                                                                  \frac{\gamma^G(\frac{j+1}{k}) - \gamma^G(\frac{j}{k})}{1/k} =  k\frac{\partial \Vert \tilde{f}(\gamma^{G}(\frac{j+1}{k})) - \tilde{f}(x^r) \Vert_2 }{ \partial \gamma^{G}(\frac{j+1}{k})}\frac{\eta}{W_j} \\
                                                                  \frac{\gamma^G(\frac{j}{k}) - \gamma^G(\frac{j-1}{k})}{1/k}  = k\frac{\partial \Vert \tilde{f}(\gamma^{G}(\frac{j}{k})) - \tilde{f}(x^r) \Vert_2 }{ \partial \gamma^{G}(\frac{j}{k})}\frac{\eta}{W_j}     \\
                                                                \end{cases},
\end{align}
where the first line is the forward difference and the second is the backward difference.
\par Using the backward difference in Eq. \ref{difference} and Riemann sum of Eq. \ref{PathIG}, we derive IG\textsuperscript{2} attribution (for the $i^{th}$ feature):
\begin{align}
  \phi^{IG^2}_i & = \int_{0}^{1} \frac{\partial f(\gamma^G(\alpha))}{\partial \gamma^G(\alpha)_i}  \frac{\partial \gamma^G(\alpha)_i}{\partial \alpha} d\alpha \notag                                                                                            \\
                & \approx \sum_{j=0}^{k-1} \frac{\partial f(\gamma^{G}(\frac{j}{k}))}{\partial \gamma^{G}(\frac{j}{k})_i} \times \frac{ \partial \Vert \tilde{f}(\gamma^{G}(\frac{j}{k})) - \tilde{f}(x^r) \Vert_2 }{ \partial \gamma^{G}(\frac{j}{k})_i} \notag \\
                & \quad \times \frac{k\eta}{W_j} \times (\frac{j+1-j}{k}) \notag                                                                                                                                                                                 \\
                & =\sum_{j=0}^{k-1} \frac{\partial f(\gamma^{G}(\frac{j}{k}))}{\partial x_i} \times \frac{ \partial \Vert \tilde{f}(\gamma^{G}(\frac{j}{k})) - \tilde{f}(x^r) \Vert_2 }{ \partial x_i} \times \frac{\eta}{W_j},
\end{align}
where the two denominators $\partial \gamma^{G}(\frac{j}{k})_i$ are substituted as $\partial x_i$, due to $\gamma^{G}(\frac{j}{k}) = x + \delta$ (Line 8, Algorithm \ref{algo1}).

\par Practically, the integration direction (from baseline to explicand) is opposite to the GradPath search direction (from explicand to baseline). Thus, backward difference requires the complete GradPath to be computed first, and then the gradient is integrated. If using forward difference, the gradient integral can be computed simultaneously with the GradPath.

\par For the implementation details of IG\textsuperscript{2}, Section \ref{imple_detail} comprehensively discusses the hyperparameter impacts on IG\textsuperscript{2} attribution, including reference, step size, step number, normalization, and similarity measures. The IG\textsuperscript{2} computational cost is also analyzed and compared with other attribution methods.


\begin{figure}[t]
  \setlength{\belowcaptionskip}{-0.3cm}
  \centering
  \includegraphics[width=0.42\textwidth]{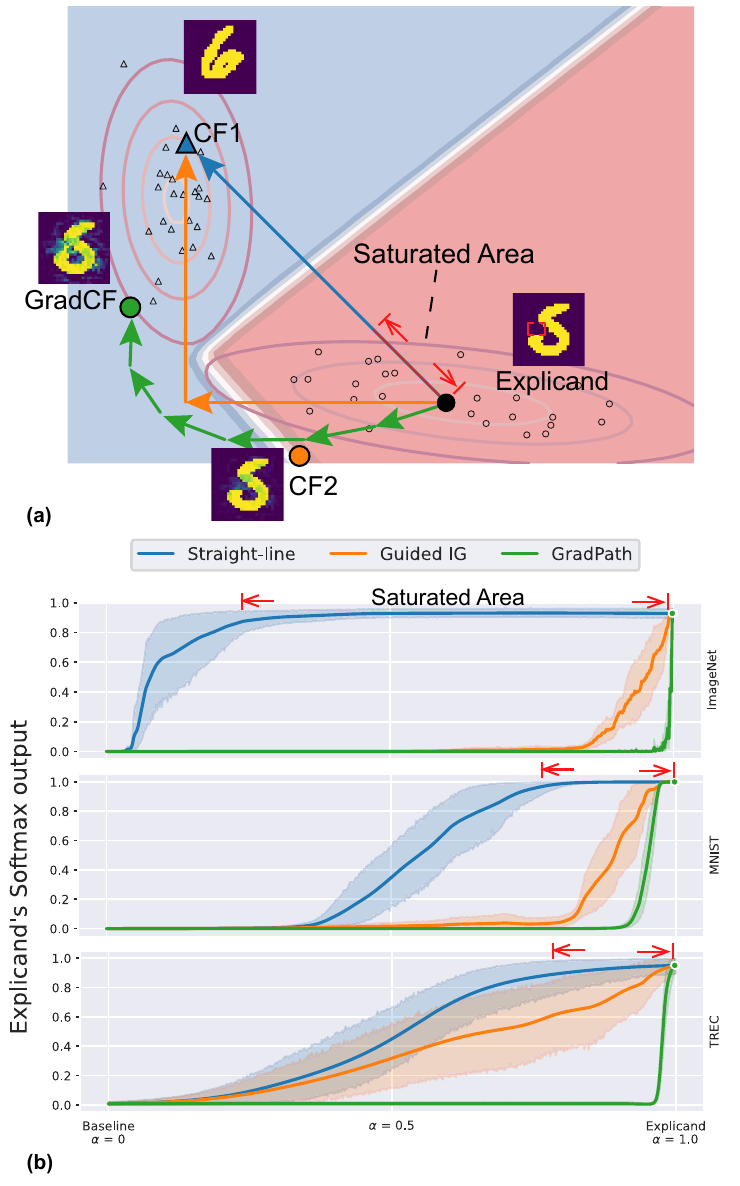}
  \caption{
    \textbf{(a) Illustration for three counterfactual examples and three integration paths:} CF1 sampled from counterfactual data distribution, CF2 generated by an adversarial attack and the GradCF using CF1 as the reference. The saturated area on straight-line path is marked in red. An MNIST explicand (digital 5) and three CFs (digital 6) are plotted.  \textbf{(b) Graphs of explicand's Softmax prediction} along integration paths on ImageNet, MNIST, and TREC, averaged on 100 samples of each dataset.}
  \label{2gau_example}
\end{figure}
\begin{figure}[!t]
  \centering
  \includegraphics[width=0.35\textwidth]{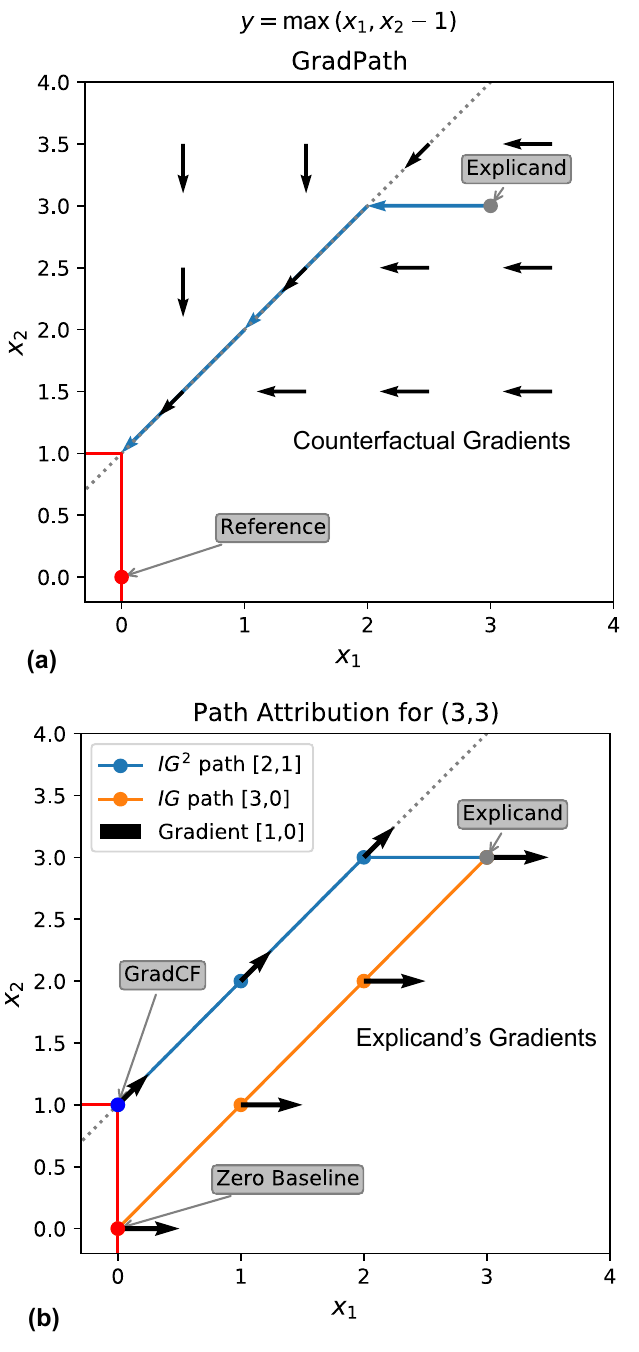}
  \caption{\textbf{The toy example for feature attribution} with function $y=\max(x_1,x_2-1)$ on explicand $(x_1,x_2)=(3,3)$ with the zero reference $(0,0)$. \textbf{(a)} GradPath on counterfactual gradients in blue line. \textbf{(b)} The attribution results of three methods in the legend.}
  \label{toy_example}
\end{figure}
\section{Interpreting IG\textsuperscript{2}}
\label{Interpreting}
\par The novel baseline (GradCF) and novel integration path (GradPath) are two major contributions of IG\textsuperscript{2}, so we discuss IG\textsuperscript{2} by interpreting the superiority of these two components. Specifically, Section \ref{interp_gradcf} and Section \ref{inter_gp} answer the questions: \emph{Why GradCF and GradPath are better baseline and integration path than the existing methods?}
\par Theoretically, the desirable axioms of IG\textsuperscript{2} and GradCFE are justified in Section \ref{Axioms}.


\subsection{GradPath: mitigating saturation effects}
\label{inter_gp}
\par The superiority of GradPath for feature attribution is discussed from the perspective of saturation effects.
\begin{definition}
  \emph{(Saturation effects)}~\cite{guided,Saturation} The straight-line path of IG is susceptible to travel through the saturated area where model output is not changing substantially with respect to $\alpha$. In this area, the feature gradient is not pointing toward the integration path. This phenomenon leads to the accumulation of noisy attributions, called saturation effects.
  \label{def_sat}
\end{definition}

\par The integral value in Eq. \ref{PathIG} can be decomposed into two multipliers: (input) feature gradient and path direction. The presence of a saturated area indicates that the feature gradient and integration path are in dissimilar directions (otherwise, the model prediction should drop quickly), which means the path is not moving on the important features. This will result in incorrect feature attributions, and a good integration path should avoid this undesirable area.

\par The saturation effects were analyzed in works~\cite{guided,Saturation}. Some techniques have been proposed for this issue, for example, averaging over multiple straight paths~\cite{XRAI,SmoothGrad} and splitting the straight paths into different segments~\cite{Saturation}. Guided IG~\cite{guided} explicitly avoids the saturated area by designing the path based on the absolute values of feature gradients. However, the path of Guided IG is still constrained in the hyper-rectangular with the straight-line path as diagonal.

\par GradPath effectively mitigates the saturation effects. Recalling the objective of GradCF in Eq. \ref{obj}, we minimize the distance to the counterfactual model representation, which implicitly means the explicand's prediction is meanwhile minimized. Each step of GradPath points toward the steep direction that rapidly decreases the model prediction (shown in Fig. \ref{2gau_example}).

\par As a supplementary for graphs in Fig. \ref{Paths_comparison}, Fig. \ref{2gau_example}b displays the average output curves on three datasets. Compared with straight line, Guided IG's path and GradPath both avoid the saturated area. Since Guided IG's path is restricted, GradPath can get out of the saturated area more quickly.

\par \textbf{Toy example:} Fig. \ref{toy_example} displays a toy example, showing GradPath contributes to better feature attribution. Because $x_2$ is always smaller than $x_1$ on the straight-line path, the $\max$ function signs zero gradients on the smaller feature (right black arrows). This causes IG and gradient methods both assign zero attribution on feature $x_2$, which is inconsistent with intuition. Contrarily, integrated gradients on GradPath give nonzero attribution on $x_2$ with a more reasonable baseline, GradCF $(0,1)$.

\begin{figure*}[!t]
  \centering
  \includegraphics[width=0.82\textwidth]{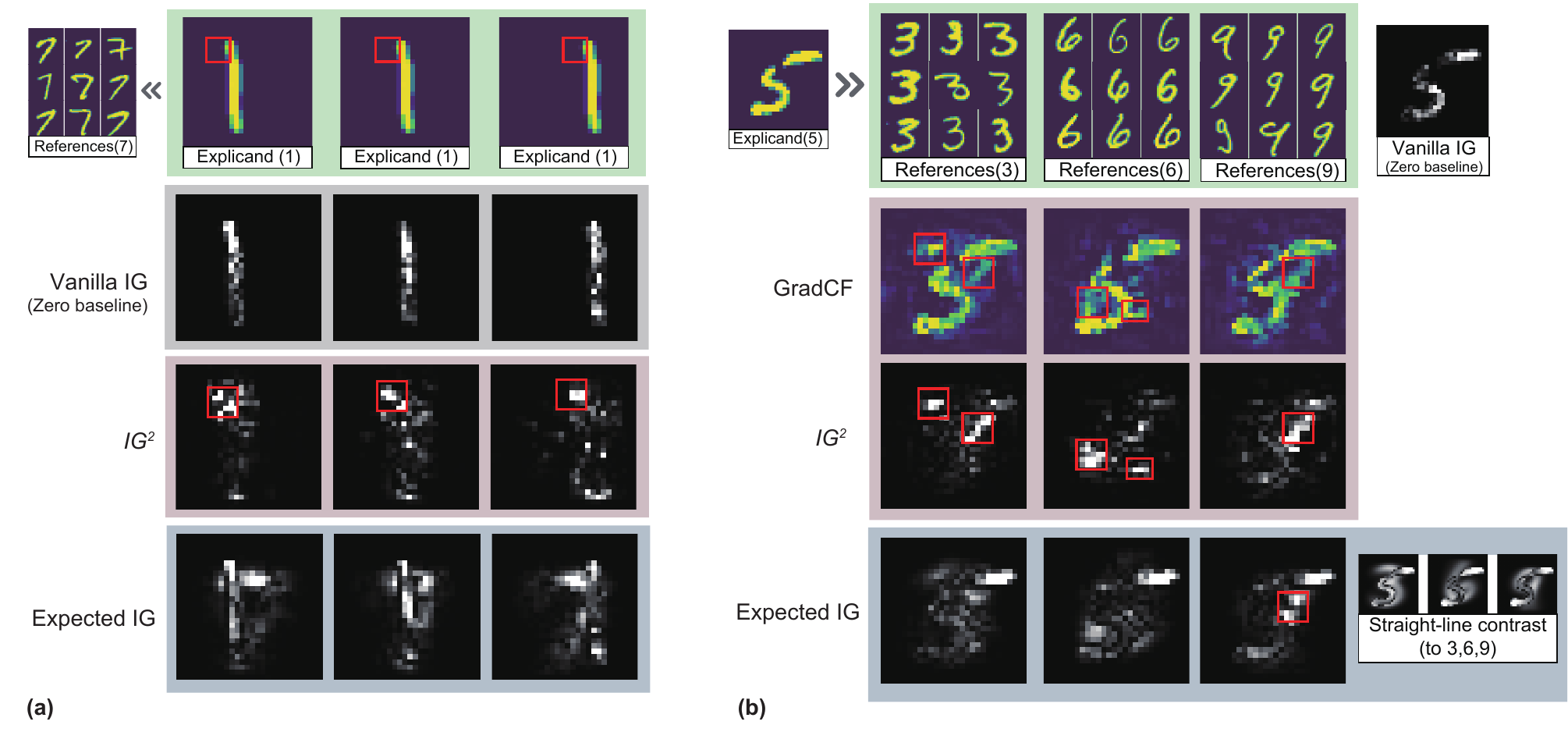}
  \caption{
    \textbf{Feature attributions of MNIST samples.} \textbf{(a) Shifted digital 1s explained with the references of digitals 7.} \textbf{(b) Digitals 5 explained with the references of different categories (digitals 3, 6, 9).} The \emph{most} critical areas that distinguish the explicand to reference are marked by red boxes, e.g., the explained digitals 1 will become digital 7 if we filled these areas. IG\textsuperscript{2} is compared with Vanilla IG (using black baseline) and Expected IG (using references as the baselines).  }
  \label{mnist_example}
\end{figure*}

\subsection{GradCF}
\label{interp_gradcf}

\par Based on works in the field of counterfactual explanation (CFE), we summarize a good counterfactual baseline in path methods should have the following desirable properties:
\begin{itemize}
  \item \emph{Validity}~\cite{CF_GDPR} The counterfactual baseline should be classified in the desired class (different from the explicand).
  \item \emph{Data manifold closeness}~\cite{CFE_review} It would be hard to trust a counterfactual if it resulted in a combination of features that were utterly unlike any observations the classifier has seen before. Therefore, a generated counterfactual should be realistic in the sense that it is near the training data.
  \item \emph{Explicand relevance} A good counterfactual example should be related to the explicand to directly contrast features. Though almost all the CFE methods~\cite{CFE_review} generate counterfactual examples based on the explicand, the baselines of most existing path methods are explicand-agnostic (see Table \ref{Summary}).
\end{itemize}

\par The commonly used uninformative baselines (black and white) violate all three properties, which are unrealistic and do not have any information on classes. For instance, the black pixel will not be attributed with an all-black baseline, even though it contributes to the model output.
\par Expected IG~\cite{EG} solves this by using the samples from the data distribution (CF1 in Fig. \ref{2gau_example}), which satisfies the validity and data manifold closeness. However, the sampling procedure of train data is \emph{explicand-agnostic}, which provides inaccurate contrast in the feature space (discussed in Fig. \ref{mnist_example}a).
\par A basic counterfactual example (CF2 in Fig. \ref{2gau_example}) is also compared. CF2 is generated with projected gradient descent (PGD)~\cite{madry2018towards} attack for the minimal perturbation that causes the model to give a counterfactual prediction~\cite{ignatiev2019relating}. Though this basic CF is related to the explicand, it is usually unrealistic and violates the \emph{Data manifold closeness}.
\par Distinctively, GradCF satisfies all three describable properties, which correlates with the explicand and stays on the manifold of counterfactual data. In other words, the generation of GradCF simultaneously implies the closeness to feature manifolds of explicand and counterfactual.

\par \textbf{MNIST examples:} Fig. \ref{mnist_example}a demonstrates the significance of \emph{Explicand relevance}. Using digitals 7 as the references, we explain shifted digitals 1 at different positions. To human intuition, the critical areas distinguishing digital 1 to 7 are at the top left of digital 1 (see the red boxes). Fig. \ref{mnist_example}a shows that the highlighted areas of IG\textsuperscript{2} are synchronized with the shift of explicands and critical areas, that is \emph{Explicand relevance}.
\par As for Expected IG, using reference samples as baselines only provides the naive pixel contrast at the input feature space. This will result in explanations that are irrelevant to the explicand, which is obviously inconsistent with the intuition (see the last rows in Fig. \ref{mnist_example}).
\par Fig. \ref{mnist_example}b shows the impact of the different references, where the critical areas are marked by red boxes. Attributed to the gradients of counterfactual classes, the GradCF and IG\textsuperscript{2} significantly highlight the critical areas of all three references. Due to the same issue of straight paths, attributions of Expected IG are overly focused on the upper right corners of digital 5, which is not the \emph{most} critical areas\footnote{The ablation study is conducted in Appendix Section \ref{append_MNIST} for validating the most critical areas that distinguishes digital 5 to digitals 3, 6, 9.}. As for IG with the all-black baseline, any pixels out of the explained digitals will not have the attributions, which is incomplete for the explanation.
\par In Section \ref{imple_ref}, we will further demonstrate the effects of counterfactual references on IG\textsuperscript{2} with multiple datasets.

\subsection{Axioms of IG\textsuperscript{2}}
\label{Axioms}
\par Works~\cite{sundararajan2020many,IG,SHAP} claimed that path methods are the unique methods that satisfy certain desirable axioms. As a subset of path methods, we justify IG\textsuperscript{2} satisfies the following four axioms: \emph{Completeness}, \emph{Dummy}, \emph{Implementation Invariance} and \emph{Symmetry}, and GradCFE also satisfies the latter three.

\par \begin{definition}
  \emph{(Completeness)} For every explicand $x$, and baseline $x'$, the attributions $\phi_i$ add up to the prediction difference $f(x)-f(x')$:
  \begin{align}
    \phi_i = f(x)-f(x').
  \end{align}
\end{definition}
\begin{remark}
  Like other path integral methods, IG\textsuperscript{2} also integrates the gradient in a conservative (input) vector field. Since all the path methods satisfy Completeness regardless of the path shape (see \cite{IG,friedman2004paths}), IG\textsuperscript{2} satisfies Completeness.
\end{remark}

\par Due to our proposal utilizing the model representation, we extend the definitions of the three remaining axioms to the representation layer version, the extension of which is respectively (i.e., resp.) shown in the brackets. Notably, if we use the output layer as representation, no extension is needed.
\par \begin{definition}
  \emph{(Dummy)} Dummy features get zero attributions. A feature $i$ is dummy in a function $f$ (\emph{resp. $\tilde{f}$}) if for any two values $x_i$ and $x'_i$ and every value $x_{N\setminus i}$ of the other features, $f(x_i; x_{N\setminus i}) = f(x'_i; x_{N\setminus i})$ (\emph{resp. $\tilde{f}(x_i; x_{N\setminus i}) = \tilde{f}(x'_i; x_{N\setminus i})$}). Conceptually, the feature that is not referenced by the model naturally requires zero attributions.
\end{definition}
\begin{remark} IG\textsuperscript{2} satisfies Dummy and GradCFE satisfies Dummy at the representation layer. The latter is a sufficient condition for the former.
\end{remark}

\begin{proof}
  According to Eq. \ref{gradpath}, given any point $\gamma^G(\alpha)$ on GradPath, the change on feature $\gamma^G(\alpha)_i$ is proportional the normalized gradient:
  \begin{align}
    \Delta \gamma^G(\alpha)_i & \propto \frac{\partial \Vert \tilde{f}(\gamma^G(\alpha)) - \tilde{f}(x^r) \Vert_2^2 }{ \partial \gamma^G(\alpha)_i}\times \frac{1}{W} \\
                              & =  \lim_{h\to 0}  \Big( \partial \Vert \tilde{f}(\cdots,x_i+h,\cdots) - \tilde{f}(x^r) \Vert_2^2 \notag                               \\
                              & \quad - \Vert \tilde{f}(\cdots,x_i,\cdots) - \tilde{f}(x^r) \Vert_2^2 \Big) \times \frac{1}{hW}.
  \end{align}
  \par Based on the definition of Dummy, $ \tilde{f}(\cdots,x_i+h,\cdots)\equiv\tilde{f}(\cdots,x_i,\cdots) $, so that
  \begin{align}
    \Delta \gamma^G(\alpha)_i \equiv 0.
  \end{align}
  \par According to the definitions in Eq.\ref{IG2} and Eq. \ref{CFE}, we get the zero attributions of both IG\textsuperscript{2} and GradCFE for dummy features at the representation layer.
  \par The Dummy axiom of IG\textsuperscript{2} does not require the feature to be dummy at the representation layer. Suppose the feature is dummy only for the model output (unextended definition). In that case, IG\textsuperscript{2} still assigns zero attribution on this feature, the proof of which is similar (explicand's gradient for dummy features in Eq. \ref{IG2} constantly equals to zero) and not repeated here.
\end{proof}

\begin{definition}
  \emph{(Implementation Invariance)} Two networks are functionally equivalent if their outputs (\emph{resp. representations}) are equal for all inputs, despite having very different implementations. Attribution methods should satisfy Implementation Invariance, i.e., the attributions are always identical for two functionally equivalent networks and do not refer to implementation details.
\end{definition}
\begin{remark} IG\textsuperscript{2} and GradCFE satisfy Implementation Invariance. The former is a sufficient condition for the latter. IG\textsuperscript{2} and GradCFE only relies on the function gradients of output and representation (GradCFE only concerns representation gradient), which are invariant to the models' internal implementation before the representation layer.
\end{remark}

\begin{definition}
  \emph{(Symmetry)} For every function $f$ (\emph{resp. $\tilde{f}$}) is symmetric in two variables $i$ and $j$, if $f(\cdots,x_i,x_j,\cdots)=f(\cdots,x_j,x_i,\cdots)$ (\emph{resp. $\tilde{f}$}). If the explicand $x$ are such that $x_i = x_j$, the attributions of symmetric function for features $i$ and $j$ should be equal. Conceptually, under the symmetric function, the identical symmetric variables receive identical attributions.
\end{definition}
\begin{remark} IG\textsuperscript{2} and GradCFE preserve Symmetry. The former is a sufficient condition for the latter. Notably, previous path methods are symmetry requires the variable $i$ and $j$ of the baseline $x'$ are also identical, $x'_i=x'_j$, while our methods do not. The explicand with identical symmetric variables will consequently lead to the identical symmetric variables in the synthesized baseline GradCF.
\end{remark}
\begin{proposition}
  (Guided IG~\cite{guided}) If the values of symmetric variables are equal at every point of the integration path, then their attributions are equal. Therefore, such a path attribution method is symmetry preserving.
  \label{symmetry}
\end{proposition}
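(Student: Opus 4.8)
The plan is to show that the two multipliers in the path integrand of Eq. \ref{PathIG}---the feature gradient and the path direction---coincide for the symmetric variables $i$ and $j$ at every $\alpha$, so that the integrands for $\phi_i$ and $\phi_j$ agree pointwise and the two integrals are therefore equal.

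First I would establish a gradient identity on the diagonal $x_i=x_j$. Treating every coordinate other than $i,j$ as a fixed parameter, write $g(a,b)=f(\dots,a,b,\dots)$ with $a$ in slot $i$ and $b$ in slot $j$; the symmetry hypothesis then reads $g(a,b)=g(b,a)$. Differentiating this identity with respect to $a$ gives $g_1(a,b)=g_2(b,a)$, where $g_1,g_2$ denote the partials in the first and second arguments, and evaluating on the diagonal $a=b$ yields $g_1(a,a)=g_2(a,a)$, that is $\tfrac{\partial f}{\partial x_i}=\tfrac{\partial f}{\partial x_j}$ at every point whose $i$-th and $j$-th coordinates are equal. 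By the hypothesis of the proposition the path satisfies $\gamma(\alpha)_i=\gamma(\alpha)_j$ for all $\alpha$, so this gradient equality holds at $\gamma(\alpha)$ for every $\alpha\in[0,1]$ (and identically for $\tilde f$ in the representation-layer version).

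Next I would handle the path-direction multiplier. Differentiating the pointwise equality $\gamma(\alpha)_i=\gamma(\alpha)_j$ with respect to $\alpha$ gives $\tfrac{\partial \gamma(\alpha)_i}{\partial\alpha}=\tfrac{\partial \gamma(\alpha)_j}{\partial\alpha}$ for all $\alpha$; for the discrete GradPath of Eq. \ref{gradpath} the same conclusion follows from the finite difference of Eq. \ref{difference}, since equal coordinates at consecutive grid points force equal increments. Combining the two facts, the full integrand $\tfrac{\partial f}{\partial\gamma(\alpha)_i}\tfrac{\partial\gamma(\alpha)_i}{\partial\alpha}$ equals its $j$-counterpart at every $\alpha$, and integrating over $[0,1]$ gives $\phi_i=\phi_j$, which is exactly symmetry preservation.

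I expect the main subtlety to be the first step: symmetry of the function does \emph{not} force $\tfrac{\partial f}{\partial x_i}=\tfrac{\partial f}{\partial x_j}$ everywhere, only on the diagonal $x_i=x_j$, so the argument genuinely depends on the path never leaving that diagonal. This is precisely where the preceding remark does the work needed to apply the proposition to IG\textsuperscript{2}: an explicand with $x_i=x_j$ produces a GradCF whose $i,j$ coordinates are also equal, and because each GradPath update in Eq. \ref{gradpath} is driven by the representation gradient---itself symmetric in $i,j$ by the same diagonal argument applied to $\tilde f$---the equal-coordinate property is preserved along the entire path. Thus the hypothesis of the proposition is met automatically, without separately imposing $x'_i=x'_j$ on the baseline as conventional path methods require.
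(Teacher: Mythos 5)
Your proof is correct, but it targets something the paper itself never proves: the paper imports Proposition \ref{symmetry} from Guided IG as a black box, and the proof environment that follows it is devoted entirely to verifying the proposition's \emph{hypothesis} for IG\textsuperscript{2} and GradCFE. Concretely, the paper computes the per-step change $\Delta x_i$ along GradPath by the chain rule on $\Vert\tilde{f}(\gamma^G(\alpha))-\tilde{f}(x^r)\Vert_2^2$, uses symmetry of $\tilde{f}$ to conclude $\Delta x_i=\Delta x_j$, and propagates the equality $x_i=x_j$ from the explicand along the entire path by induction, then invokes the proposition to conclude. Your main body instead proves the implication itself: the diagonal identity $\partial f/\partial x_i=\partial f/\partial x_j$ at points with equal $i,j$ coordinates (and, as you rightly stress, \emph{only} at such points --- symmetry does not equalize gradients off the diagonal), combined with equality of the path-direction factors, makes the two integrands of Eq. \ref{PathIG} agree pointwise, so the attributions agree after integration; your closing paragraph then sketches essentially the same hypothesis-verification that constitutes the paper's actual proof, including the observation that GradCF inherits the equal coordinates. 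What each route buys: yours makes the symmetry claim self-contained, so the reader need not consult Guided IG for the implication, and it pins down exactly where the equal-coordinate assumption is used; the paper's economizes by citation and spends its effort only on what is genuinely new for IG\textsuperscript{2}, namely that GradPath automatically preserves coordinate equality, so no condition $x'_i=x'_j$ need be imposed on the baseline. Your proposal in effect covers both halves of the complete argument.
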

\begin{proof}
  According to Proposition \ref{symmetry}, IG\textsuperscript{2} and GradCFE preserve Symmetry if the values of symmetric features on GradPath $\gamma^G$ are equal at every point, the proof of which is in the following.
  \par Given explicand $x$, we only focus on the symmetric variables $i$ and $j$, $x=(\cdots,x_i,x_j,\cdots)$, where $x_i=x_j$. According to Eq. \ref{gradpath}, the change on variable
  $x_i$ along GradPath $\gamma^G$ is:
  \begin{align}
    \Delta x_i & =  - \frac{\partial \Vert \tilde{f}(\gamma^{G}(\alpha)) - \tilde{f}(x^r) \Vert_2^2 }{ \partial x_{i}}\times \frac{\eta}{W}                       \\
               & = (\tilde{f}(\gamma^{G}(\alpha)) - \tilde{f}(x^r))\times \frac{-2\eta}{W} \times \frac{\partial \tilde{f}(\gamma^{G}(\alpha)) }{ \partial x_{i}} \\
               & = \lim_{h\to 0} \frac{ \tilde{f}(\cdots,x_i+h,x_j,\cdots ) - \tilde{f}(\cdots,x_i,x_j,\cdots )}{ \partial h} \notag                              \\
               & \quad \times \frac{-2\eta}{W} \times (\tilde{f}(\gamma^{G}(\alpha)) - \tilde{f}(x^r)).
    \label{sysgrad_i}
  \end{align}
  \par and similarly, we can get the change on variable $x_j$:
  \begin{align}
    \Delta x_j & = \lim_{h\to 0} \frac{ \tilde{f}(\cdots,x_i,x_j+h,\cdots ) - \tilde{f}(\cdots,x_i,x_j,\cdots )}{ \partial h} \notag \\
               & \quad \times \frac{-2\eta}{W} \times (\tilde{f}(\gamma^{G}(\alpha)) - \tilde{f}(x^r)).
  \end{align}
  \par Because $\tilde{f}$ is symmetric for variable $i$ and $j$:
  \begin{align}
    \tilde{f}(\cdots,x_i+h,x_j,\cdots ) =  \tilde{f}(\cdots,x_i,x_j+h,\cdots ),
    \label{sysgrad_j}
  \end{align}
 two gradients in Eq. \ref{sysgrad_i} and Eq. \ref{sysgrad_j} are equal and we get the identical changes on variable $i$ and $j$:
  \begin{align}
    \Delta x_i = \Delta x_j.
  \end{align}
  \par Starting from the identical value $x_i=x_j$ of the explicand, we can get the symmetric variables $x_i$ and $x_j$ are equal at every step of GradPath. Thus, both IG\textsuperscript{2} and GradCFE provide identical attribution for the symmetric variables.

\end{proof}

\section{Related works}
\label{relat}
This section first introduces the previous attribution methods in the field of XAI, and then systemically contrasts our proposal with the related works in three sub-fields: path attribution, Shapley values and adversarially counterfactual explanation.

\subsection{Feature attribution}
\par Feature attribution is a post-hoc method that explains AI models by scoring the feature contributions to the model output~\cite{abhishek2022attribution}.
\par Gradient-based methods are widely used for attribution. One of the earliest successful work is DeconvNet~\cite{zeiler2014visualizing}, which applies a ReLU non-linearity to the gradient computation. Based on Vanilla Gradient~\cite{saliencymaps1} and DeconvNet, Guided Backpropagation~\cite{Guided_BP} introduced an additional guidance signal from the higher layers. Furthermore, Class Activation Map (CAM) was developed~\cite{zhou2016learning} and its variants like Grad-CAM~\cite{Gradcam} also achieved success.
\par Another class of attribution methods is based on perturbation, which analyses the model sensitivity by perturbing the input features. Occlusion sensitivity maps~\cite{zeiler2014visualizing} was one early method, which perturbs the input image with grey squares and observes the model prediction. Not restricted to deep models, LIME~\cite{ribeiro2016should} can be applied to any prediction model by training a linear proxy model. 

\par Shapley values can be considered a particular example of perturbation-based methods and is justified to be the unique method that satisfies certain desirable axioms~\cite{shapley1997value}. However, computing the exact Shapley value is NP-hard~\cite{matsui2001np}, which is prohibitive for deep neural networks. Thus, the related works are dedicated to efficiently approximating the Shapley values with fewer model evaluations. Some works are based on sampling: Strumbelj et al.~\cite{strumbelj2010efficient} and Mitchell et al.~\cite{mitchell2022sampling} respectively proposed Monte Carlo and quasi-Monte Carlo for randomly sampling permutations; KernelSHAP~\cite{SHAP} used LIME to reduce the number of samples; Chen et al.~\cite{chen2018shapley} leveraged the underlying graph structure for the structured data; Wang et al.~\cite{wang2022accelerating} took the advantages of contributive cooperator selection; Ancona et al.~\cite{ancona2019explaining} introduced probabilistic deep network to approximately propagate the Shapley value through the network layers. To further accelerate the approximation, FastSHAP~\cite{jethani2021fastshap} trained a surrogate model to fast generate the explanation, which avoids the expensive sampling procedures; DeepSHAP~\cite{SHAP} approximated the Shapley value at each layer and merged them by DeepLIFT~\cite{DeepLIFT} in a backward fashion.


\subsection{Path methods}
\label{relat_pathmethod}
\par Path method is a popular feature attribution that integrates the gradients along a path. Table \ref{Summary} has summarized the existing path methods to the best of our knowledge. IG\textsuperscript{2} is the first path method with both model-specific path and baseline.

\par \textbf{Contrasting Blur IG~\cite{9157016}} Blur IG resembles IG\textsuperscript{2} in using an iterative algorithm to simultaneously construct the baseline and the integration path. The nature of Blur IG is to iteratively build a path from the explicand to the baseline of a totally blurred explicand. Since the blurred baseline fully depends on the explicand, it lacks the counterfactual information within the model and data distribution. Moreover, Blur IG is restricted to images and not applicable to tabular data.

\par \textbf{Contrasting Expected IG~\cite{EG}} Expected IG uses the informative baselines from the data distribution, which is the reference of our proposal. Nevertheless, the baseline is irrelevant to the explicand, and its straight-line path naively contrasts the baseline and explicand by the difference in the feature space, which still suffers from the saturation effects and noise problem. Notably, the Expected IG\textsuperscript{2} (in Section \ref{sec_eig2}) follows the Expected IG's idea to calculate average attributions by sampling references from the data distribution.

\par \textbf{Contrasting Guided IG~\cite{guided}} Guided IG is one close work to IG\textsuperscript{2}. Firstly, our proposal implies a similar motivation of Guided IG (discussed in Section \ref{inter_gp}). Second, both integration paths are iteratively calculated based on the gradient information to explicitly or implicitly avoid the saturated area. We argue that the shape of GradPath is a generalization of Guided IG's path. If using $\ell_1$ normalization in Eq. \ref{l1norm}, the GradPath has the identical shape as Guided IG's. We select the sparse features with the largest counterfactual gradients, while Guided IG selects the smallest explicand's gradients in the converse direction (the identical shape does not guarantee the identical integration path).
\subsection{Shapley values}
\par IG methods are the generalization of Aumann-Shapley value, an extension of Shapley value to the continuous setting, which inherits desirable attribution axioms~\cite{IG}. Compared with the sampling-based Shapley value approximations, IG\textsuperscript{2} mainly advances in two aspects:
\begin{itemize}
  \item \textbf{Scalability:} Though recent algorithms can achieve efficient approximations, they are still prohibitive for the high-dimensional input features (e.g., ImageNet samples). They have to apply the superpixel (group pixels) to reduce the input dimension~\cite{mitchell2022sampling,ancona2019explaining,SHAP}, which impairs the explanation quality. On the large models, IG-based methods are much more efficient than most Shapley value algorithms, the computational time comparison to sampling-based KernelSHAP is reported in Appendix \ref{append_compute_cost_sec}.
  \item \textbf{Implicit zero baseline:} Similar to Vanilla IG, many Shapley value methods need to indicate the absence of features by replacing them with zero value, which implicitly defines a zero baseline. The zero baseline's adverse effects on feature attributions have been discussed in the previous sections.
\end{itemize}

\par \textbf{Contrasting DeepSHAP~\cite{SHAP}} Unlike other sampling-based approximations, DeepSHAP is more related to IG methods. Its core part, DeepLIFT, replaces the gradient at each nonlinear function with its average gradient, which is shown to be most often a good approximation of IG~\cite{ancona2017towards}.

\par In summary, though sharing the same theory fundamental, we argue that IG and Shapley value approximations are on two different tracks: the former is mainly designed for explaining large networks with accessible gradients, and the latter is for accurately approximating the exact Shapley value, which is more suitable for the small black-box models.

\subsection{Adversarially counterfactual explanation}
\par The adversarial learning shares the same optimization objective with counterfactual explanation, and they are tightly related~\cite{ignatiev2019relating}. Some works~\cite{9442299,attack_XAI} utilized adversarial attack to explain the network. GradCF differs from counterfactual explanation and adversarial attack in violating the principle that counterfactual explanation should be the small perturbation on the explicand~\cite{CFE_review}. Hence, we argue that GradCF is neither a canonical counterfactual explanation (GradCF still provides a counterfactual explanation) nor an adversarial attack.
\par From the perspective of methodologies, the iterative gradient descent optimization method in Eq. \ref{obj} follows the adversarial attack method. The optimization with $\ell_2$ normalization is from projected gradient descent (PGD)~\cite{madry2018towards} while $\ell_1$ normalization is from sparse adversarial attack~\cite{sparseattack}. The only difference is that adversarial attack methods clamp the computed instance within the neighborhood of explicand to guarantee imperceptible perturbations.
\par Notably, the same $\ell_2$ normalized optimization method is also used in work~\cite{ilyas2019adversarial}. Nevertheless, they focus on searching the robust features under the adversarial attack while the model explanation is out of their scope.

\begin{table}[t]
  \centering
  \caption{Evaluation of attribution methods on XAI-BENCH}
  \label{metric_XAIBENCH}
  \begin{threeparttable}
  \begin{tabular}{@{}llllll@{}}
    \toprule
                                         & fai.($\uparrow$) & mon.($\uparrow$) & ROAR($\uparrow$)         & G-S($\uparrow$)   & inf.($\downarrow$) \\ \midrule
    Random                               & -0.033        & 0.458          & 0.332          & -0.060        & 0.034        \\
    Vanilla IG                           & 0.349         & 0.440          & 0.356          & 0.714         & 0.015        \\
    Guided IG                            & 0.342         & 0.463          & 0.359          & 0.681         & 0.021        \\
    Expected IG                          & 0.596         & 0.470          & 0.365          & 0.814         & \textbf{0.014} \\
    DeepSHAP                             & 0.380         & \textbf{0.488} & 0.357         & 0.821          & 0.014        \\
    KernelSHAP                           & 0.370         & 0.435         & 0.340          & \textbf{0.901} & 0.015         \\
    \textbf{IG\textsuperscript{2}(Ours)} & \textbf{0.610}& 0.486      & \textbf{0.377} & 0.833         & 0.021          \\

    \bottomrule
  \end{tabular}
  \begin{tablenotes}
    \footnotesize
    \item[*] fai.:faithfulness mon.:monotonicity G-S:GT-Shapley inf.:infidelity
  \end{tablenotes}
\end{threeparttable}
\vspace{-4mm}
\end{table}

\begin{table}[t]
  \renewcommand\arraystretch{1.4}
  \centering
  \caption{Evaluation of attribution methods on real-world datasets}
  \label{metrics}
  \begin{threeparttable}
    \begin{tabular}{@{}llcc|cc@{}}
      \toprule
                                &                                      & \multicolumn{2}{c|}{Ground truth}          & \multicolumn{2}{c}{SIC AUC}                                     \\
      Datasets                  & Explainers                           & AUC $\uparrow$                             & SUM $\uparrow$              & ADD $\uparrow$ & DEL $\downarrow$ \\ \midrule
      \multirow{7}{*}{ImageNet} & Gradient                             & 0.482                                      & 0.336                       & 0.467          & 0.209            \\
                                & Vanilla IG                           & 0.536                                      & 0.327                       & 0.476          & 0.205            \\
                                & Guided IG                            & 0.599                                      & 0.464                       & 0.545          & 0.212            \\
                                & Expected IG                          & 0.666                                      & 0.431                       & 0.557          & 0.116            \\
                                & DeepSHAP                             & 0.694                                      & 0.470                       & 0.543          & 0.206            \\
                                & KernelSHAP                           & 0.747                                      & 0.498                       & 0.561          & 0.274            \\
                                & \textbf{IG\textsuperscript{2}(Ours)} & \textbf{0.805}                             & \textbf{0.516}              & \textbf{0.656} & \textbf{0.115}   \\
      \midrule
      \multirow{7}{*}{TREC}     & Gradient                             & \multicolumn{2}{c|}{\multirow{7}{*}{---*}} & 0.909                       & 0.189                             \\
                                & Vanilla IG                           & \multicolumn{2}{c|}{}                      & 0.937                       & 0.159                             \\
                                & Guided IG                            & \multicolumn{2}{c|}{}                      & 0.938                       & 0.156                             \\
                                & Expected IG                          & \multicolumn{2}{c|}{}                      & 0.940                       & 0.141                             \\
                                & DeepSHAP                             & \multicolumn{2}{c|}{}                      & 0.933                       & 0.170                             \\
                                & KernelSHAP                           & \multicolumn{2}{c|}{}                      & 0.933                       & 0.172                             \\
                                & \textbf{IG\textsuperscript{2}(Ours)} & \multicolumn{2}{c|}{}                      & \textbf{0.942}              & \textbf{0.140}                    \\

      \midrule
      \multirow{7}{*}{\shortstack{Wafer                                                                                                                                               \\map}}  &  Gradient         &  0.570                                   &  0.226                    & 0.661               & 0.216                        \\
                                & Vanilla IG                           & 0.732                                      & 0.342                       & 0.829          & 0.061            \\
                                & Guided IG                            & 0.758                                      & 0.450                       & 0.789          & 0.050            \\
                                & Expected IG                          & 0.850                                      & 0.488                       & 0.883          & 0.038            \\
                                & DeepSHAP                             & \textbf{0.863}                             & 0.528                       & 0.890          & \textbf{0.029}   \\
                                & KernelSHAP                           & 0.683                                      & 0.339                       & 0.707          & 0.042            \\
                                & \textbf{IG\textsuperscript{2}(Ours)} & 0.849                                      & \textbf{0.551}              & \textbf{0.898} & 0.036            \\
            \midrule

      \multirow{7}{*}{CelebA$\dagger$} & Gradient                      & 0.748                                      & 0.211                     & 0.745          & 0.296            \\
                                & Vanilla IG                           & 0.705                                      & 0.175                       & 0.740          & 0.314            \\
                                & Guided IG                            & 0.653                                      & \textbf{0.225}                       & 0.744          & 0.370            \\
                                & Expected IG                          & 0.698                                      & 0.189                       & 0.737          & 0.293            \\
                                & DeepSHAP                             & 0.699                                      & 0.179                        & 0.750          & 0.309           \\
                                & KernelSHAP                           & 0.788                                      & 0.212                       & 0.765          & \textbf{0.197}            \\
                                & \textbf{IG\textsuperscript{2}(Ours)} & \textbf{0.795}                             & 0.224              & \textbf{0.815} & 0.205   \\
    \bottomrule
    \end{tabular}
    \begin{tablenotes}
      \footnotesize
      \item[*] The ground truth of TREC is not available.
      \item[$\dagger$] The ground truth of CelebA face attributes is generated by face parsing model pretrained on CelebAMask-HQ~\cite{CelebAMask-HQ}, which is detailed in Appendix \ref{append_clelba}.

    \end{tablenotes}
  \end{threeparttable}
  \vspace{-1.2em}
\end{table}

\section{Experiments}
\label{exper}
\par We conduct the attribution experiments on one synthetic dataset, XAI-BENCH, and four real-world tasks: image classification on ImageNet, question classification on TREC,  anomaly classification on wafer map failure pattern, and face attribute classification on CelebA. We compare IG\textsuperscript{2} with six methods:
\begin{itemize}
  \item \textbf{(Vanilla) Gradient:} The fundamental feature attribution method based on backpropagation, using the input gradient w.r.t. the model's prediction for generating the saliency maps.
  \item \textbf{Vanilla IG, Guided IG, Expected IG:} Three IG-based methods, which have been discussed in the previous sections.
  \item \textbf{KernelSHAP:} A basic sampling method for approximating Shapley values of black-box models, which is a common baseline for approximation algorithms. Due to the scalability limitation, the superpixel technique is applied when attributing ImageNet and wafer map samples with KernelSHAP.
  \item \textbf{DeepSHAP:} A high-speed Shapley value approximation for deep models based on DeepLIFT.
\end{itemize}

\par For the baselines of compared methods, Vanilla IG, KernelSHAP, and Guided IG all use the black image as the (implicit) baseline, Expected IG and DeepSHAP samples the baselines from the same distribution as the counterfactual references of IG\textsuperscript{2}. 



\subsection{XAI Benchmark}
\par First, we evaluate our proposal on the synthetic datasets and metrics released by XAI-BENCH~\cite{XAI_bench}, a benchmark for feature attribution algorithms. Synthetic datasets allow the efficient computation of the ground truth Shapley values and other metrics, which is intractable on real-world datasets. We briefly introduced the dataset and metrics, the details of which can be found in the XAI-BENCH work~\cite{XAI_bench}.
\subsubsection{Synthetic Dataset}
\par The features are sampled from a multivariate normal distribution $\mathbf{X}\sim \mathcal{N}(\mathbf{\mu},\mathbf{\Sigma})$, where $\mathbf{\mu}$ is the mean vector and $\mathbf{\Sigma}$ is the covariance matrix. The labels are binary (0 and 1) and defined over a \emph{piecewise} distribution with the function $\Psi(x)$. The explained model is a trained three-layer perceptron for the regression task on the synthetic dataset. The specification of the model and synthetic dataset are reported in Appendix \ref{append_datamodel_xai}.

\subsubsection{Metrics}
\par We use five metrics from XAI-BENCH: (1) \emph{faithfulness} computes the Pearson correlation coefficient between the attribution and the approximate marginal contribution for each feature; (2)\emph{monotonicity} computes the fraction of the marginal improvement for feature with attribution order $i$ is greater than the marginal improvement for feature with attribution order $i+1$; (3)\emph{ROAR} is remove-and-retrain, which retrains the model with the features removed and the area-under-the-curve (AUC) of the model is computed; (4) \emph{GT-Shapley} computes the Pearson correlation coefficient of the feature attribution to the ground-truth Shapley values (ground-truth marginal improvement); (5) \emph{Infidelity} is computed by considering the effects of replacing each feature with a noisy baseline conditional expectation.

\subsubsection{Results}
\par Table \ref{metric_XAIBENCH} reports the five metrics for evaluating the feature attributions on XAI-BENCH datasets. The Vanilla IG and Guided IG use zero baselines with different paths, and the Expected IG uses data distribution as the baseline. We also use the randomly generated attribution (Random) as a weak comparison.
\par The results evaluate the feature attributions of the small model and low input dimensions. IG\textsuperscript{2} generally outperforms most other methods on the first three metrics. Unsurprisingly, the Shapley value sampling method (KernelSHAP) achieves the best performance on the GT-Shapley metric. The improvement of Expected IG over other IG methods reveals that using the exception of attribution from the data distribution baseline is effective.

\begin{figure*}[!t]
  \setlength{\belowcaptionskip}{-0.4cm}
  \centering
  \includegraphics[width=0.65\textwidth]{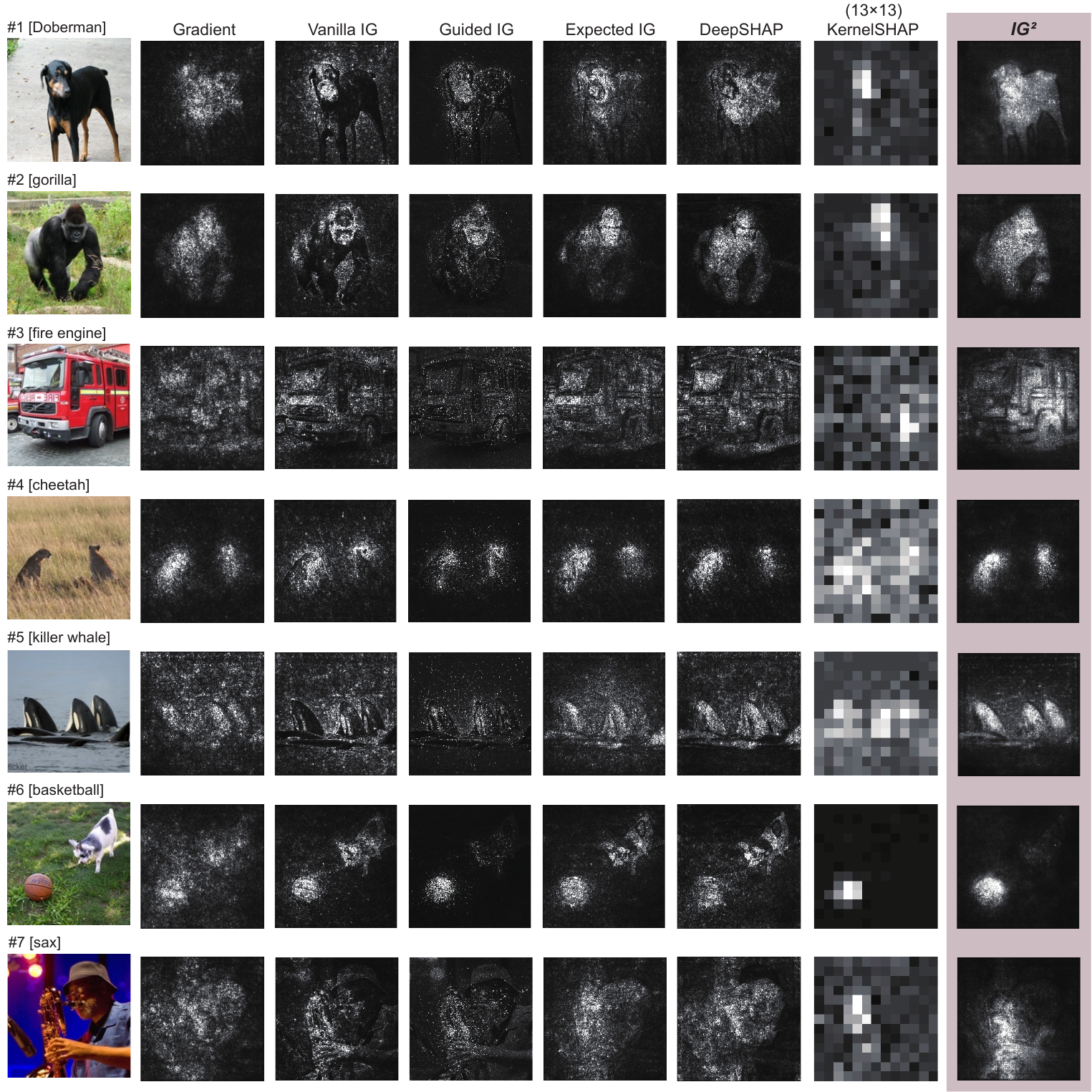}
  \caption{\textbf{Feature attributions on images from ImageNet dataset.} The predicted classes are listed in square brackets. }
  \label{imagenet_explain}
\end{figure*}

\subsection{Metrics for real-world datasets}
We adopt two types of quantitative metrics for evaluating the feature attribution on real-world datasets:
\par \textbf{Ground truth annotation~\cite{8315047}} The first metric requires the ground truth segmentation annotated by humans. The better attribution should be closer to the ground truth annotations. Specifically, this metric treats the attributions as binary classification prediction scores. With changing the threshold of attribution scores to be negative class, the area under the receiver operating characteristic curve is calculated~\cite{guided}, which is called \emph{ground truth-AUC}. We can also use the multiplication of (normalized) attribution scores and ground truth to show the sum of attributions on the annotated features, which is called \emph{ground truth-SUM}.

\par \textbf{Softmax Information Curve (SIC AUC)~\cite{XRAI}} This metric is free from the ground truth annotation and measures how much the attributed features can influence the model prediction, which is similar to the marginal contribution concept in the Shapley theory. The better attribution should have a better focus on where the model is truly looking. There are two metrics in different directions. The first one gradually adds the feature values of the explicand to the background. By sliding an attribution threshold, the feature with the largest attribution is first added and the least the last. The better attribution method should increase the model prediction more quickly, which can be quantized by the area under the Softmax prediction curve w.r.t threshold, \emph{SIC AUC-ADD}. Conversely, another metric deletes the most important feature until all the features are replaced by the background~\cite{EG}. Similarly, the better attribution method should decrease the model prediction quicker, where the AUC w.r.t. threshold is called \emph{SIC AUC-DEL}.

\par Table \ref{metrics} summarizes four metrics of three real-world datasets, where IG\textsuperscript{2} significantly outperforms other methods in general.

\subsection{Image classification explanation}
\label{sec_image_exp}
\subsubsection{Dataset}
\par We validate IG\textsuperscript{2} on a standard image classification dataset, ImageNet. We take the explained images from the ILSVRC~\cite{ILSVRC15} subset (1k classes) with the ground truth annotations. We use the pre-trained classifier of Inception-v3~\cite{Inceptionv3} with input size 299$\times$299. We only consider the images that are predicted as one of the top 5 classes by the Inception-v3 classifier.

\subsubsection{Attributions}
\par Fig. \ref{imagenet_explain} shows ImageNet images explained with four IG-based methods and two Shapley value methods, where IG\textsuperscript{2} generally outperforms previous techniques. Whereas guided IG efficiently reduces the noises by constructing the path that avoids the saturated areas (discussed in Section \ref{inter_gp}), it gives incomplete attributions due to the zero baseline, which is also a drawback of Vanilla IG and KernelSHAP. This problem is especially evident in the images with black subjects (images \#1, \#2, \#4, and \#5 in Fig. \ref{imagenet_explain}).
\par Expected IG and DeepSHAP achieve similar attribution results. They mitigate the incomplete attribution problem by introducing informative baselines providing more attributions on image subjects, but it still suffers from the undesirable noise problem. This is especially obvious in the images with interference objects (e.g., the dog in \#6[basketball] and the player in \#7[sax]). Expected IG and DeepSHAP will incorrectly highlight these objects, which are irrelevant to the explicand class label.

\par IG\textsuperscript{2} combines two advantages of Guided IG and Expected IG, perspectively attributed to two techniques, GradPath and GradCF:
\par \textbf{Less noise by GradPath:} As discussed in Section \ref{inter_gp}, the integration path of IG\textsuperscript{2} successfully mitigates the saturation effects. Image attributions in Fig. \ref{imagenet_explain} also validate this superiority. IG\textsuperscript{2} provides significantly less noisy attributions (less noise on background or irrelevant objects) over IG and Expected IG that use the straight-line path. Compared with Guided IG, IG\textsuperscript{2} is competitive and slightly better on some samples (e.g., image \#7).
\par \textbf{More complete attribution by GradCF:} As discussed in Section \ref{interp_gradcf}, the explicand-specific GradCF of IG\textsuperscript{2} can highlight the critical features that distinguish the explicand from the counterfactual reference. As for images from ImageNet, the critical areas should be the subjects of the explicand label. Appendix Fig. \ref{imagenet_gracfe} shows the difference between the explicand and GradCF (i.e., GradCFE). Based on this counterfactual contrast, IG\textsuperscript{2} attributions highlight the critical features more completely than IG, Guided IG, and even Expected IG.

\par Table \ref{metrics} with quantitative metrics shows our proposal achieves the best performances. Despite the incomplete attributions, the features highlighted by Expected IG are enough to make the classifier give incorrect predictions. This is why Expected IG achieves comparable SIC AUC-DEL value to IG\textsuperscript{2}, but got much worse performances on SIC AUC-ADD and ground truth metrics, which are more dependent on attribution completeness. Since the superpixel technique makes the high attributions concentrated on small areas, KernelSHAP achieves relatively good performance on metrics about ground truth annotation, but its SIC AUC metrics are not desirable.

\subsection{Question classification explanation}
\subsubsection{Dataset}
\par In the field of natural language processing (NLP), question answering is an important task. Question classification can categorize the questions into different semantic classes (whether the question is about location, person or numeric information, etc.), which can impose constraints on potential answers. For instance, the question--``\emph{Where did guinea pigs originate?}'' should be classified as having the answer type [location].
\par We use TREC question dataset~\cite{li2002learning} involving six semantic classes and train a CNN-based classifier (TextCNN)~\cite{textcnn}. We attribute word-level features in order to seek the trigger words that contribute most to the answer type.

\subsubsection{Attributions}
\par Fig. \ref{trec_explain} lists questions sampled from five classes from TREC dataset with word attributions by IG\textsuperscript{2} and IG. IG uses the all-zero embedding vector as the baseline. Compared to IG, the trigger words highlighted by IG\textsuperscript{2} are more consistent with human grammatical perception. We summarize two advantages of IG\textsuperscript{2} over IG.
\par \textbf{Less attributions on weak interrogative words:} Some initial interrogative words are strongly associated with the question types, e.g., ``\emph{where}'' indicates the question for [location] and ``\emph{who}'' indicates [human] (see questions \#2 and \#7 in Fig. \ref{trec_explain}). In this case, these interrogative words should be strongly attributed.
\par On the other hand, some interrogative words are weakly related. For instance, ``\emph{what}'' and ``\emph{which}'' may indicate almost all the question types (questions \#1, \#3, \#4, \#5, \#8, and \#9). Word ``\emph{how}'' itself is ambiguous, which becomes a trigger phrase only when combined with other words (\#6 and \#10). These weakly related interrogative words should be less attributed.
\par Shown in Fig. \ref{trec_explain}, Vanilla IG strongly attributes all the interrogative words, whereas IG\textsuperscript{2} precisely attributes different interrogative words. IG\textsuperscript{2} keeps large attributions on strong interrogative words (``where'' and ``who'' in questions \#2 and \#7), and provides much less attributions on weak interrogative words (the remaining questions).
\par \textbf{More attributions on critical phrases:} Compared with IG, attributions of IG\textsuperscript{2} concentrate more on the critical phrases, such as, ``\emph{drug}'' and ``\emph{name revolt}'' for [entity] (questions \#3 and \#4), ``\emph{how do}'' for \#6[description], ``\emph{sought to}'' for \#7[human], and ``\emph{lengths}'' for \#9[numeric].
\par Table \ref{metrics} validates that word attributions of IG\textsuperscript{2} are more consistent with the model behavior than other methods. Since the number of input features in TREC dataset is small (maximum 37 words), sampling-based KernelSHAP is competitive with other path methods.

\begin{figure}[!t]
  \centering
  \includegraphics[width=0.5\textwidth]{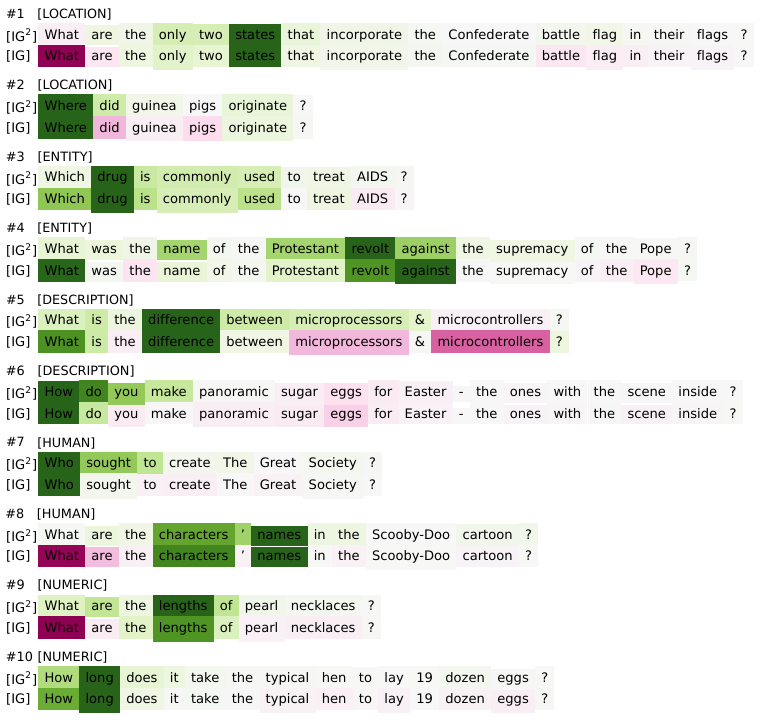}
  \caption{\textbf{Word attributions for questions from TREC dataset.} IG\textsuperscript{2} is compared with Vanilla IG. The color depth indicates attribution strength, and the color type indicates the attribution direction (red is negative, and green is positive). The predicted classes are listed in the top of each question. The complete attributions of all methods are reported in Appendix Fig. \ref{trec_explain_supply}.}
  \label{trec_explain}
\end{figure}

\begin{figure}[!t]
  \setlength{\belowcaptionskip}{-0.4cm}
  \centering
  \includegraphics[width=0.47\textwidth]{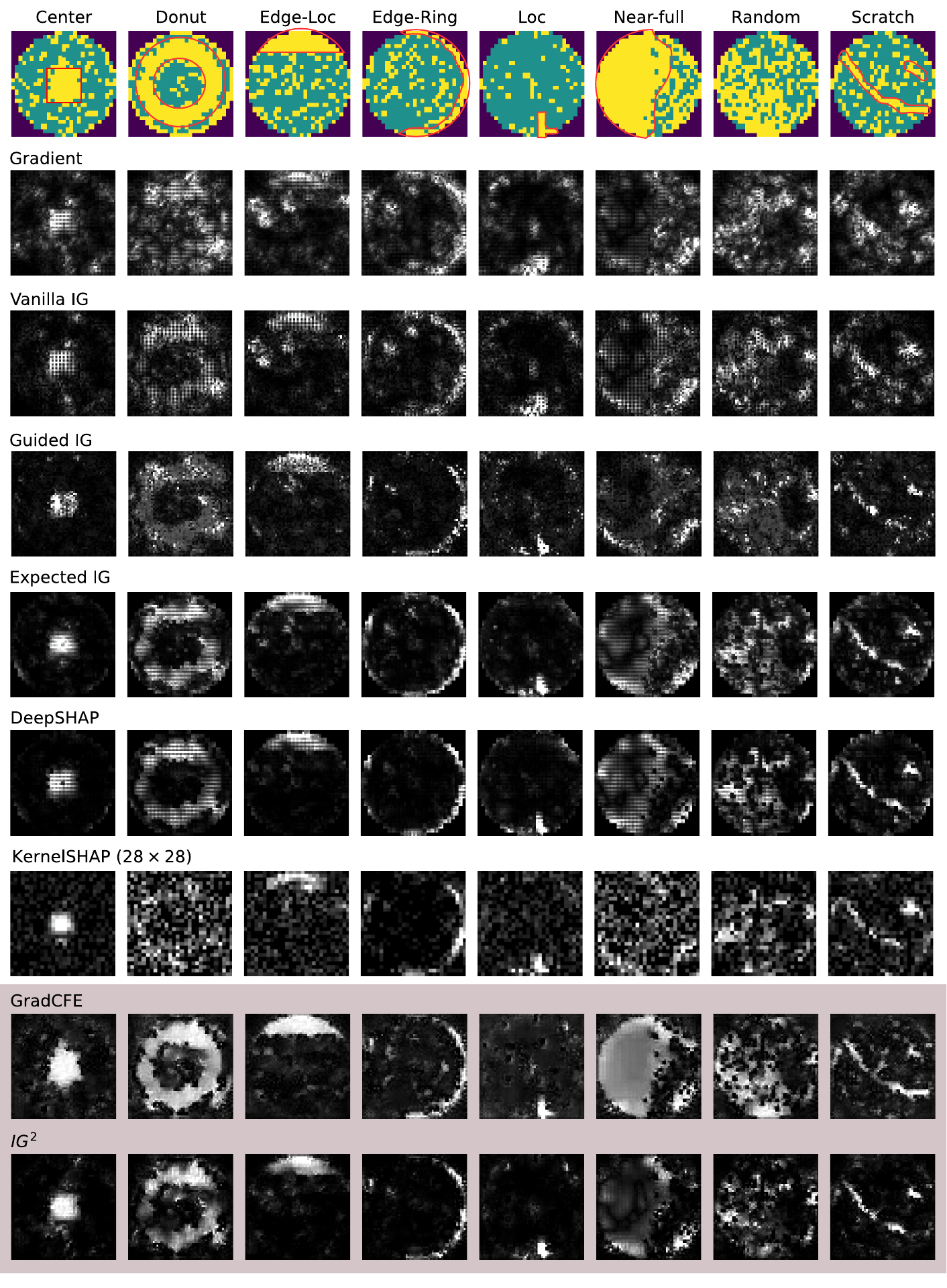}
  \caption{\textbf{Feature attributions on wafer maps with eight different failure patterns.} The first row shows the original anomaly wafer maps with the red ground truth annotation based on expert knowledge.}
  \label{Wafermap_comparison}
\end{figure}

\subsection{Wafer map failure pattern explanation}
\subsubsection{Dataset}
\label{wafermap}
Wafer map analysis is critical in daily semiconductor manufacturing operations. Wafer maps provide visual details that are crucial for identifying the stage of manufacturing at which wafer pattern failure occurs. Instead of manual work, automatically identifying different types of wafer map failure patterns can effectively improve the efficiency of the semiconductor manufacturing process.
\par The explanation of classification deep neural network for wafer map failure pattern can determine which parts (pixels) of the wafer maps are the cause that leads to the failure. This explanation enhances the model's ability to automatically identify the cause of the anomaly wafer maps rather than only recognizing the failure types.
\par Specifically, we use WM-811K dataset~\cite{6932449}, which is the largest known wafer map dataset available to the public. We use a subset of the whole dataset for training the classification model based on convolution neural networks (CNN), achieving a classification accuracy greater than 98.5\% on both train and test sets. The implementation detail of the classification model and sampled instances from WM-811K are provided in Appendix~\ref{append_datamodel_wafer}.

\subsubsection{Attributions}
\par First of all, Fig. \ref{Wafermap_comparison} directly compares the different attributions on eight samples with different patterns in WM-811K dataset. Compared to naive gradient methods, integrated gradients significantly improves feature attribution. Still, the Vanilla IG fails to completely highlight the failure patterns (as the red ground truth) and suffers from the noise problem. Though Guided IG efficiently reduces the noise on the irrelevant features by the designed path, its attribution is still incomplete, caused by the arbitrary baseline containing less counterfactual information. Expected IG and DeepSHAP solve this by using the informative baselines over the data distribution of [nonpattern] instances, but its straight-line path still introduces some noises (especially on the circle edges).
\par Our GradCF solves the inaccurate counterfactual information problem in the existing baselines, shown by GradCFE. It highlights the features contributing to the model representation difference between the explicand and reference. However, it simultaneously accumulates lots of undesirable noises on irrelevant features. IG\textsuperscript{2} successfully solves this side effect by incorporating the gradient of explicand's prediction, which significantly reduces the noise attributions by filtering out the features that have less contribution to the output of explicand's class.
\par Overall, compared with the advanced existing attribution methods, IG\textsuperscript{2} provides less noisy attributions that are most consistent with expert knowledge and human intuition. Moreover, Table \ref{metrics} reports quantitative metrics to evaluate the attributions. The results show that IG\textsuperscript{2} generally outperforms other advanced path attribution methods and KernelSHAP. Our proposal significantly improves the Vanilla IG's performance and is competitive with Expected IG and DeepSHAP.

\begin{figure}[!t]

  \centering
  \includegraphics[width=0.47\textwidth]{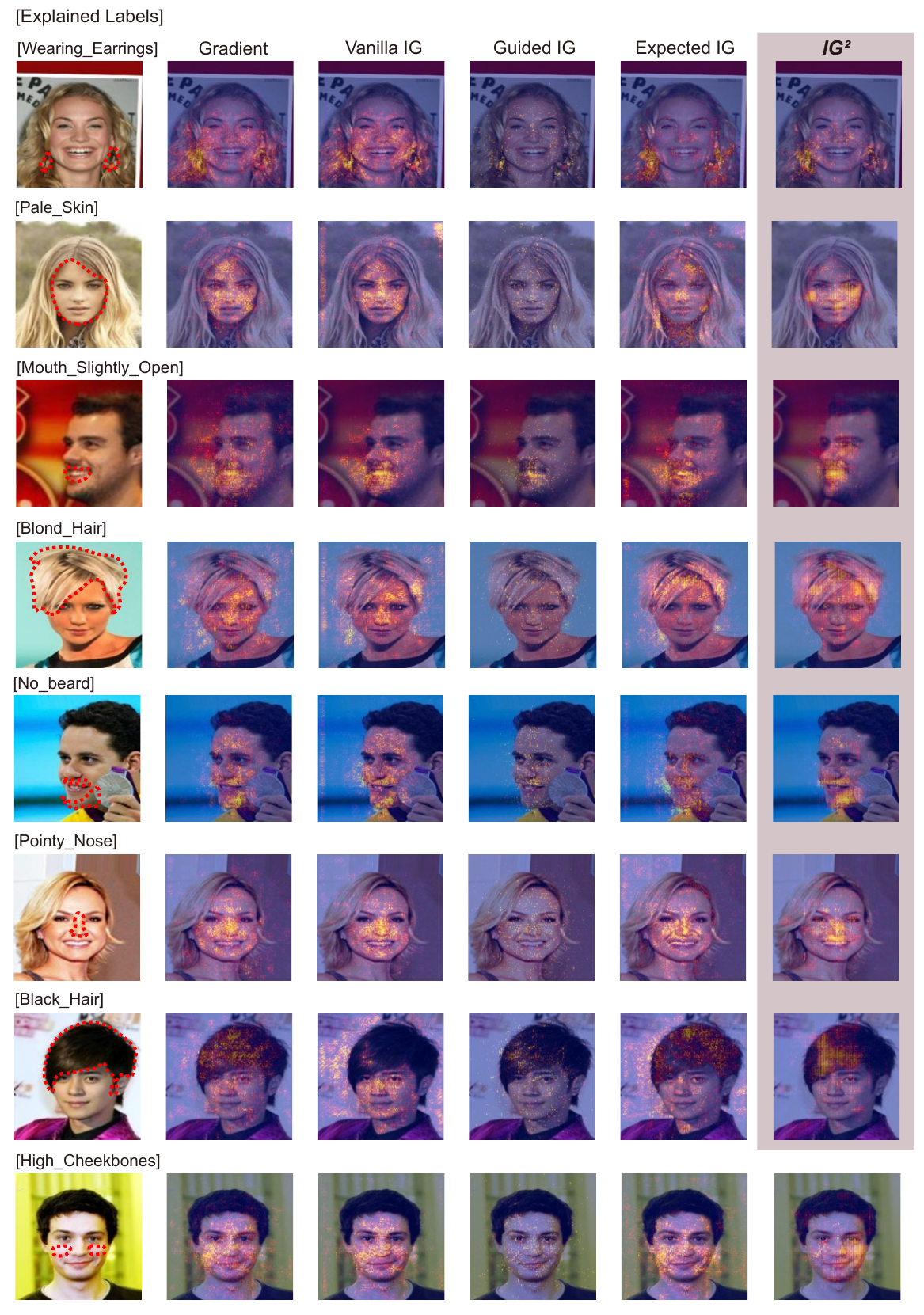}
  \caption{\textbf{Feature attributions on CelebA face attributes.} The explained labels appear at the top of each explicand. The critical facial areas associated with the explained label are marked in red dashed lines. The feature attributions are shown as the heatmaps on the images, where warmer colors (reds and yellows) denote areas of higher importance.}
  \label{celebA_res}
\end{figure}

\subsection{Face attribute classification explanation}
\label{celeba_attr}
\subsubsection{Dataset}
Each face image in CelebA~\cite{liu2015faceattributes} has 40 binary face attribute labels, indicating the presence or absence of specific facial attributes like smiling, wearing earrings, or having a mustache. We train the face attribute classification model on the CelebA dataset, based on MobileNet-v2~\cite{DBLP:journals/corr/abs-1801-04381} with 40 output nodes corresponding to each face attribute. The classification accuracy for each face attributes are reported in Appendix \ref{append_clelba}.
\par For multi-label classifier, we separately explains each output label, i.e., one face attribute at a time~\cite{10.1145/3563039}. We use the counterfactual references that are most relevant to the explicand, i.e., the faces with labels differ in the explained attribute but are closet in other face attributes. The effect of references on CelebA will be analyzed in Section \ref{sec_ref_celebA}.

\subsubsection{Attributions}
\par Fig. \ref{celebA_res} showcases the feature attributions on the CelebA face images. The effectiveness of a feature attribution method is determined by its ability to accurately emphasize the image region associated with the explained label (as indicated by red dashed line areas in Fig. \ref{celebA_res}). Using face image with label [Black\_Hair] as a case study, IG\textsuperscript{2} method demonstrates a more focused attribution towards the hair region, with less noises than other methods. Conversely, Vanilla IG and Guided IG encounter issues with black baselines in face images, mistakenly ignoring black pixels. Although Vanilla Gradient and Expected IG methods show relatively good performance on CelebA datasets, they still suffer from the saturation effect, leading to undesirable attributions on the irrelevant pixels, such as the image background.

\par Overall, the attributions by IG\textsuperscript{2} method are more precisely aligned with the facial regions that are relevant to the explained labels, whereas other methods tend to produce noisier and less accurate attributions on the critical facial regions. 

\subsection{Ablation study}
\label{sec_ablation}
Furthermore, based on the wafer map dataset, we studied the impact of GradCF and GradPath independently as the baseline and integration path for path methods. Table \ref{ablation} reports three different baselines (black, train data, and GradCF) under three different paths (straight-line, Guided IG's, and GradPath). As a baseline under the straight-line path and Guided-IG's path, GradCF achieves the best performance of most metrics compared with the other two. On the other hand, GradPath outperforms than other two paths with the GradCF baseline.
\par In general, we can conclude that: (1) GradCF is a good baseline for path attribution, even independently combined with other paths. (2) GradPath outperforms other paths on the GradCF baseline, which accomplishes IG\textsuperscript{2}.

\begin{table}[t]
  \centering
  \caption{Ablation study of GradCF and GradPath}
  \label{ablation}
  \begin{threeparttable}
    \begin{tabular}{@{}llll|ll@{}}
      \toprule
                                       &                  & \multicolumn{2}{c|}{Ground truth} & \multicolumn{2}{c}{SIC AUC}                                     \\
      Paths                            & Baselines        & AUC $\uparrow$                    & SUM $\uparrow$              & ADD $\uparrow$ & DEL $\downarrow$ \\ \midrule
      \multirow{3}{*}{Straight-line}   & Black            & 0.732                             & 0.342                       & 0.829          & 0.061            \\
                                       & Train data       & 0.810                             & 0.445                       & 0.777          & 0.053            \\
                                       & \textbf{GradCF}  & 0.845                             & 0.510                       & 0.842          & 0.041            \\  \midrule
      \multirow{3}{*}{Guided-IG's}     & Black            & 0.758                             & 0.450                       & 0.789          & 0.050            \\
                                       & Train data       & 0.804                             & 0.493                       & 0.775          & 0.043            \\
                                       & \textbf{GradCF}  & 0.781                             & 0.472                       & 0.819          & 0.037            \\  \midrule
      GradPath (IG\textsuperscript{2}) & \textbf{GradCF}* & 0.849                             & 0.551                       & 0.898          & 0.036            \\ \bottomrule
    \end{tabular}

    \begin{tablenotes}
      \footnotesize
      \item[*] The baseline of GradPath is iteratively synthesized, so only GradCF is available for GradPath.

    \end{tablenotes}
  \end{threeparttable}
\end{table}

\section{Implementation details}
\label{imple_detail}
\par We discuss the details of IG\textsuperscript{2}, including the reference, step, normalization, similarity measure, and computational cost. We provide in-depth analyses of different choices on these hyper-parameters. Some supplementary experiments are reported in Appendix \ref{append_imple}.
\subsection{Reference}
\label{imple_ref}
\par The choice of counterfactual reference is a major hyper-parameter of IG\textsuperscript{2}. Specifically, IG\textsuperscript{2} attributions are sensitive to the \emph{category} of counterfactual reference (while relatively insensitive to different samples of the same category).
\par \textbf{Choice of reference category:} For the classification tasks on different datasets, the ways to sample references are also different:
\begin{itemize}
  \item \textbf{Anomaly classification:} For the dataset consisting of anomaly and normal samples (e.g., wafer map failure patterns), it is natural to use the samples of the normal category as the references for the anomaly explicand rather than other anomaly categories.
  \item \textbf{General classification:} Most datasets do not have such a natural category for reference, such as ImageNet, TREC, etc. Without loss of generality, we randomly sample the references from the different categories. Empirically, we recommend uniformly sampling references from more categories and a few (1 or 2) samples per category.
  \item \textbf{Tricks for denoising:} During experiments, we find using the references of categories that are closely relevant to the explicand can reduce the noises in attributions, but at the cost of losing completeness. This will be beneficial for the explicands with interference terms, such as some ImageNet samples (Section \ref{sec_imagenet_refs}) and multi-label CelebA samples (Section \ref{sec_ref_celebA}).
\end{itemize}

\par Notably, without loss of generality, we take the second one (without denoising tricks) as the default reference choice strategy. The effect of reference category on IG\textsuperscript{2} attributions are discussed on four datasets, MNIST (Fig. \ref{mnist_example}), ImageNet (Section \ref{sec_imagenet_refs}), TREC (Section \ref{sec_trec_ref}), and CelebA (Section \ref{sec_ref_celebA}).

\begin{figure}[!t]
  \setlength{\belowcaptionskip}{-0.4cm}
  \centering
  \includegraphics[width=0.48\textwidth]{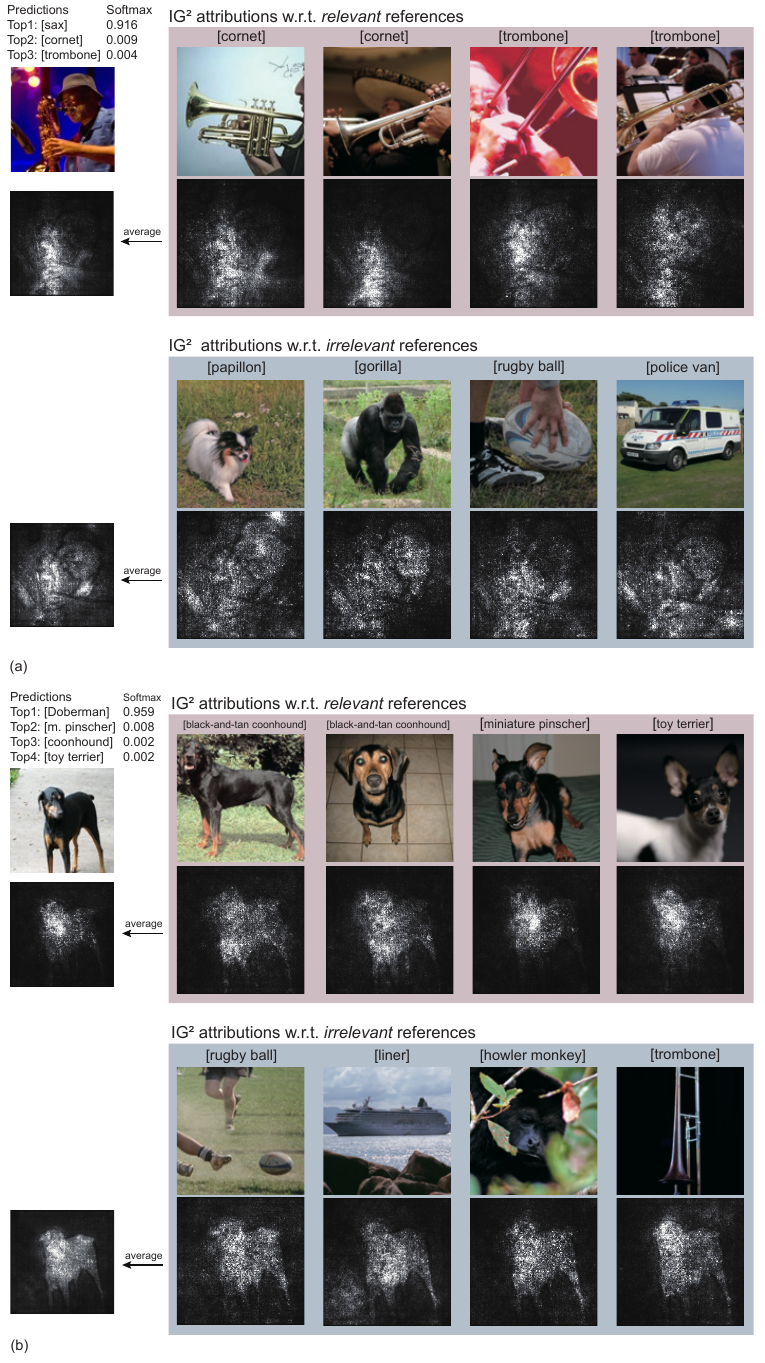}
  \caption{\textbf{Feature attributions w.r.t. different references.} \textbf{(a) Explained image of class [sax]} with interference object ``player''. \textbf{(b) Explained image  of class [Doberman]} on pure background.}
  \label{imagenet_refs}
\end{figure}

\subsubsection{Expected IG\textsuperscript{2} over references}
\label{sec_eig2}
\par To eliminate the influence of the reference choice and reduce noise, we use the expectation of IG\textsuperscript{2} as the final attribution. The Expected IG\textsuperscript{2} is averaged over the references sampled from counterfactual categories, which is defined as:
\begin{align}
  \phi^{IG^2}_i =\mathbb{E}_{x^r\sim D^r} \ \phi^{IG^2}_i (x_r),
  \label{eq_EIG2}
\end{align}
where $D^r$ is the data distribution of the counterfactual categories to the explicand and $\phi^{IG^2}_i (x_r)$ is calculated by Eq. \ref{IG2} as a function of $x_r$. In the following, we make no distinction between IG\textsuperscript{2} and Expected IG\textsuperscript{2}, and use Expected IG\textsuperscript{2} as the practice for experiments.

\begin{table}[]
  \centering
  \caption{Evaluation of IG\textsuperscript{2} reference choice strategies on ImageNet}
  \label{ref_quant}
  \begin{threeparttable}
    \begin{tabular}{@{}llll|ll@{}}
      \toprule
                                &                  & \multicolumn{2}{c|}{Ground truth} & \multicolumn{2}{c}{SIC AUC}                                     \\
      Strategy                  & Set of explicand & AUC $\uparrow$                    & SUM $\uparrow$              & ADD $\uparrow$ & DEL $\downarrow$ \\ \midrule
      \multirow{3}{*}{Relevant} & W/ interference  & 0.905                             & 0.345                       & 0.901          & 0.084            \\
                                & Pure background  & 0.755                             & 0.633                       & 0.537          & 0.119            \\
                                & Whole              & 0.798                             & 0.551                       & 0.641          & 0.109            \\ \midrule
      \multirow{3}{*}{Irrelevant}   & W/ interference  & 0.864                             & 0.273                       & 0.876          & 0.086            \\
                                & Pure background  & 0.781                             & 0.613                       & 0.568          & 0.126            \\
                                & Whole*             & 0.805                             & 0.516                       & 0.656          & 0.115            \\ \bottomrule
    \end{tabular}

    \begin{tablenotes}
      \footnotesize
      \item[*] The strategy reported in Table \ref{metrics}.

    \end{tablenotes}
  \end{threeparttable}
  \vspace{-3mm}
\end{table}

\subsubsection{References of ImageNet}
\label{sec_imagenet_refs}
\par During experiments of ImageNet, we find that the category of reference can influence the IG\textsuperscript{2} attributions. We categorize the references into two types by their labels: \emph{relevant} references and \emph{irrelevant} references. \emph{Relevant} references are the samples of categories that are closely related to the explicand, e.g., the classes in model top 3 predictions, and \emph{irrelevant} references are the samples of other classes.
\par Generally, we find that only using \emph{relevant} references will lead to more concentrated (less noisy) but less complete attributions. Contrarily, the \emph{irrelevant} references can ensure the complete attribution covering the whole image subject but at the cost of introducing more noise. Fig. \ref{imagenet_refs} and Appendix Fig. \ref{append_imagenet_refs} intuitively compares the IG\textsuperscript{2} attributions w.r.t. different references. Moreover, we propose a trick to solve this trade-off in the choice of references.
\par \textbf{Trick of reference choice:} Empirically, the explicands with interference objects will be beneficial from the \emph{relevant} references (see Fig. \ref{imagenet_refs}a and Appendix Fig. \ref{append_imagenet_refs}a), the interference objects of which will be less attributed. Contrarily, the image with the explained subject on the pure background will be beneficial from the \emph{irrelevant} references (see Fig. \ref{imagenet_refs}b and Appendix Fig. \ref{append_imagenet_refs}b), where attributions on subjects will be more complete. This rule allows us to sample the references from particular categories to improve the attribution quality.
\par This trick is consistent with the intuition of counterfactual explanation: \emph{the contrast between ambiguous (hard-to-identify) classes will highlight the most critical features of explicands.} Notably, this trick is only necessary for some explicands in specific datasets. In more general cases, we just need to sample the references uniformly from the counterfactual classes.
\par We conduct a quantitative evaluation for different reference choice strategies on the ImageNet dataset, the results of which are reported in Table \ref{ref_quant}. We split the explained samples into two subsets: One subset is with interference objects, and another one is on the pure background. We consider two strategies: ``\emph{Relevant}'' selects references from the relevant categories of the top 4 predictions; ``\emph{Irrelevant}'' randomly and uniformly samples references from other categories.
\par By the denoising trick (using relevant references), the attributions are more concentrated, which leads to higher \emph{ground truth-SUM} and \emph{SIC AUC-DEL} for all images. For the interfered images, all four metrics are significantly improved by this strategy. Since images on the pure background account for the majority in ImageNet ($\sim 72\%$ in our test set), the two reference choice strategies are competitive on the whole test set. Without loss of generality, on ImageNet dataset, IG\textsuperscript{2} uses the strategy that randomly samples the references by default.


\begin{figure}[!t]
  \centering
  \includegraphics[width=0.49\textwidth]{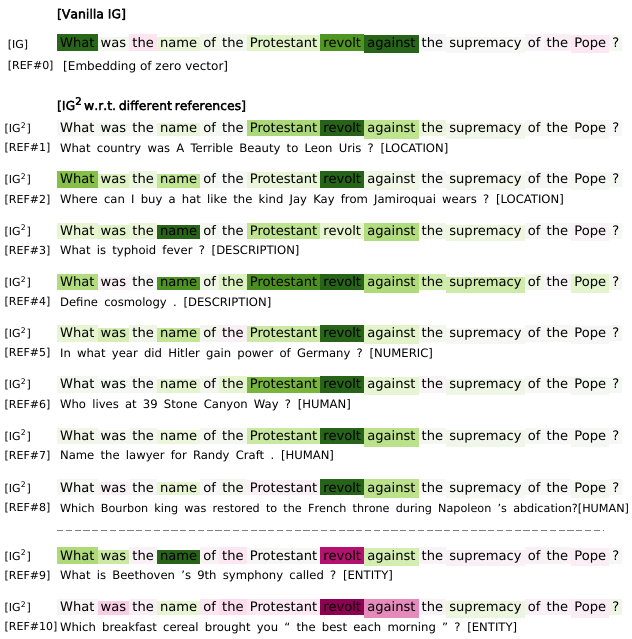}
  \caption{\textbf{Word attributions w.r.t. different references}, for the question ``\emph{What was the name of the Protestant revolt against the supremacy of the Pope?}'' of class [entity]. REFs \#1 to \#8 are used for calculating the Expected IG\textsuperscript{2} reported in Fig. \ref{trec_explain} \#4. }
  \label{trec_ref}
\end{figure}

\begin{figure}[!t]
  \centering
  \setlength{\belowcaptionskip}{-0.4cm}
  \includegraphics[width=0.40\textwidth]{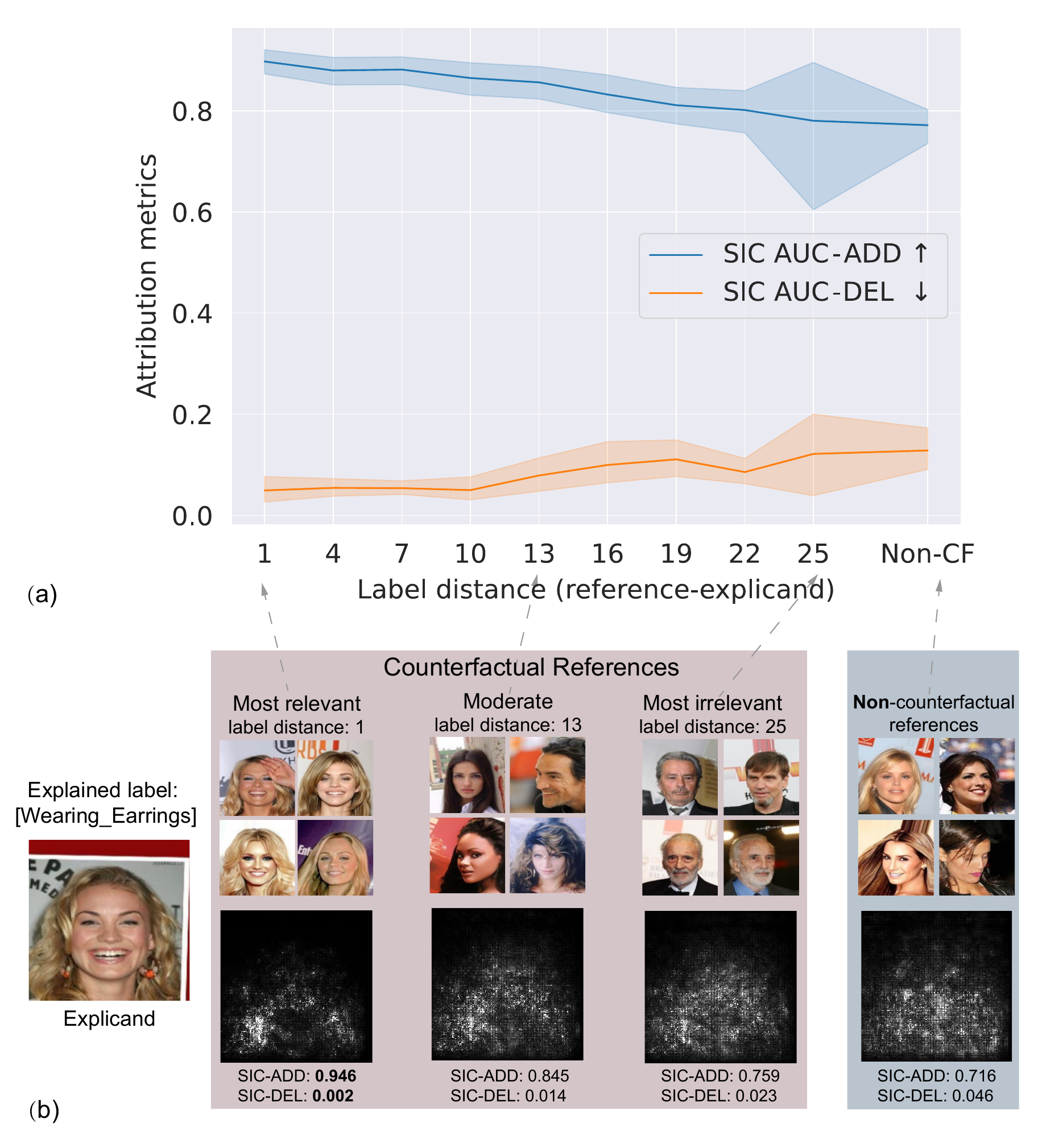}
  \caption{\textbf{(a) Curves of attribution metrics w.r.t. CelebA label distance} between reference and explicand. \textbf{(b) A CelebA example attributed by IG\textsuperscript{2}} of different references (explained label: [Wearing\_Earrings]).}
  \label{celebA_illu}
\end{figure}
\subsubsection{References of TREC}
\label{sec_trec_ref}
\par Fig. \ref{trec_ref} lists IG\textsuperscript{2} attributions w.r.t. different references, for the question \#4 in Fig. \ref{trec_explain}. Fig. \ref{trec_ref} intuitively shows that references of different categories provide different counterfactual contrast, which makes IG\textsuperscript{2} explanation more in line with the human-authored grammar rules.
\par Based on grammar rules, the subject phrase of the question \#4 (of class [entity]) should be ``name of revolt''. The two most attributed words by Vanilla IG are ``what'' and ``against'' (see REF \#0 in Fig. \ref{trec_ref}), which are inaccurate since they do not determine the class [entity].
\par IG\textsuperscript{2} uses references \#1 to \#8, where some insights can be found into the counterfactual explanation:
\begin{itemize}
  \item Compared to IG, all the references make IG\textsuperscript{2} have much less attribution on ``what'', which is irrelevant to the class [entity]. Notably, even if the interrogative word of reference is not ``what'', IG\textsuperscript{2} attribution is still significantly reduced (see REFs \#4 to \#8).
  \item Almost all the counterfactual references contrastively highlight ``\emph{revolt}'', since modifying this word is the fastest way to turn the explicand into another class~\footnote{For instance, replacing ``\emph{revolt}'' with ``\emph{country}'' turns the question into class [location], and replacing with ``\emph{leader}'' turns into [human].}. Reference \#3 is an exception, an interpretation of which is: if we only modify the word ``\emph{revolt}'', the sentence will hardly become the class [description]; but if removing the phrase ``\emph{the name of}'', the explained question will be similar to reference \#3.
  \item The word ``\emph{name}'' is also more highlighted by IG\textsuperscript{2}, but the references of classes [human] and [location] do not provide this contrast. This is because the word ``\emph{name}'' is also related to the classes [human] and [location] (e.g., ``name of lawyer'' and ``name of country'').
\end{itemize}
\par Finally, REFs \#9 and \#10 in Fig. \ref{trec_ref} list the attributions w.r.t. the references of the same class, where the word ``\emph{revolt}'' is highlighted in the opposite direction. The attributions w.r.t. non-counterfactual references are confusing, which is why we only consider the references of different classes when we compute the Expected IG\textsuperscript{2}.


\subsubsection{Reference of CelebA}
\label{sec_ref_celebA}
\par We leverage a multi-label dataset to quantitatively evaluate the choice of references. Thus we can use the $\ell_1$ distance of label vectors to measure the similarity between references and explicand, and then build the quantitative correlation between the reference categories and feature Attributions.

\par  Fig. \ref{celebA_illu}a reports the feature attribution metric curves w.r.t. label distances, averaged over 300 CelebA samples. Fig. \ref{celebA_illu}b displays an explicand with the explained label [Wearing\_Earrings], showing its attributions w.r.t. different references. The non-counterfactual references have the identical explained label to the explicand, and \emph{vice versa}.

\par The results show, the similar counterfactual references (with small label distances) lead to better feature attributions (e.g., highlighting the earring pixels). Higher label distances result in a decline in the attribution metrics. The non-counterfactual references cannot contrast the explained label, which leads to the worst feature attributions.
\par Based on the results, we make two conclusions for the reference choice on the multi-label dataset: (1) References should be counterfactual on the explained label; (2) Using relevant references will benefit the feature attributions of multi-label samples. The second conclusion is consistent with the above denoising tricks for images with interference objects. In the multi-label classification, the not-explained labels (e.g., the face attributes other than [Wearing\_Earrings]) can be regarded as the interference objects, so relevant references will make the feature attributions less noisy and more concentrated on the explained image parts.

\subsection{Step size and number}
\par Appendix Fig. \ref{append_steps} uses an ImageNet example to illustrate the effect of step size and number on IG\textsuperscript{2} attribution. Appendix Fig. \ref{append_steps}a shows the objective loss (Eq. \ref{obj}) curves during the optimization for GradPath, and Appendix Fig. \ref{append_steps}b shows IG\textsuperscript{2} attributions w.r.t. different step sizes and numbers. The step sizes and numbers chosen for different datasets are reported in Appendix Table \ref{hp_ig2}.

\par \textbf{Step size} can affect the IG\textsuperscript{2} attributions. There is a trade-off issue on step size: small step sizes tend to result in less noisy but incomplete attributions, whereas large step sizes result in complete but noisy attributions. This is intuitive: when the total magnitude is more limited, the perturbation will be concentrated on a few more important features. In practice, we will choose a moderate step size that can well balance these two sides.

\par \textbf{Step number} is not a critical hyper-parameter. Appendix Fig. \ref{append_steps} shows that IG\textsuperscript{2} attribution does not substantially change when objective loss approaches convergence. Hence, we set the step number to a relatively large value that can allow optimizations of most explicands to be converged.


\subsection{Representation distance measure}
\par IG\textsuperscript{2} constructs GradPath and searches GradCF by minimizing the distance between model representations of reference and explicand. Eq. \ref{obj} shows this optimization objective, where the distance between two vectors is based on the Euclidean measure. The usage of Euclidean distance is inspired by the feature matching trick in GAN training~\cite{DBLP:journals/corr/SalimansGZCRC16} and the adversarial robustness work~\cite{ilyas2019adversarial}. Besides, the Cosine similarity and $\ell_1$ norm are two common measures for the vector distance. We conduct an ablation study to compare the different distance measures in IG\textsuperscript{2}.
\par Appendix Table \ref{similar_quant} reports the performance gap between Cosine similarity and $\ell_1$ norm to Euclidean distance, and Appendix Fig. \ref{rep_sim} displays ImageNet samples attributed by three different measures. Based on the quantitative and qualitative results, we argue that Euclidean distance is not significantly better than Cosine similarity, while $\ell_1$ norm is not a good choice for IG\textsuperscript{2} attributions. Empirically, IG\textsuperscript{2} uses the Euclidean distance by default, achieving slightly higher evaluation metrics in experiments. 

\subsection{Computational cost}
\par We analyze the computational cost of IG\textsuperscript{2}. The major computational cost of path methods is gradient calculation. Since GradPath requires the same gradient calculation times as the gradient integration, IG\textsuperscript{2} requires at least twice the computational cost over other methods. On the other hand, IG\textsuperscript{2} commonly requires about 10 references to get the desired performance, while other methods only require one. Hence, the number of gradient calculations in IG\textsuperscript{2} is about 20 times more than other methods.
\par In practice, we can reduce the running time by calculating the gradients of different references in one batch. The practical running time of IG\textsuperscript{2} is about 10 to 20 times that of (Expected) IG and about 3 times that of Guided IG. Appendix Table \ref{append_compute_time} reports the average explanation time per sample of different methods. Despite the increased computational cost, the running time of IG\textsuperscript{2} is still practically feasible. Compared with the sampling-based KernelSHAP method, even with superpixel techniques, IG\textsuperscript{2} is significantly faster on the high-dimensional datasets.

\par Besides above hyperparameters, the discussions about normalization and representation layer are presented in Appendix \ref{append_normal} and \ref{append_rep}


\section{Conclusion}
\label{conclusion}
This paper proposes a novel feature attribution method, \underline{I}terative \underline{G}radient path \underline{I}ntegrated \underline{G}radients (IG\textsuperscript{2}), which simultaneously incorporates two gradients, the explicand's and counterfactual. IG\textsuperscript{2} proposes two novel essential components of path methods: baseline (GradCF) and integration path (GradPath). GradPath incorporates the counterfactual gradient into its direction and implicitly solves the saturation effects and noisy attributions. GradCF is the first baseline that contains the information of both model and explicand, avoiding the previous arbitrary baseline choice.
\par We contrast our work with path methods and the works in the field of counterfactual explanation and adversarial learning. We argue that our work can be regarded as a generalization of Guided IG. We intuitively interpret our proposal and justify the desirable axioms of IG\textsuperscript{2} in theory. The effectiveness of IG\textsuperscript{2} is verified by an XAI benchmark and multiple real-world datasets from diverse domains with qualitative and quantitative results. Moreover, the ablation study reveals that GradPath and GradCF individually improve the attribution of IG methods, harmonized by IG\textsuperscript{2}.

\bibliographystyle{IEEEtran}
\bibliography{IEEEabrv,bibliography}
\appendices

\renewcommand\thefigure{\thesection.\arabic{figure}}  
\setcounter{figure}{0}
\renewcommand\thetable{\thesection.\arabic{table}}  
\setcounter{table}{0}

\section{Implementation details}
\label{append_imple}

\subsection{Step size and number}
\par Fig. \ref{append_steps}a shows the objective loss (Eq. \ref{obj} in main paper) curves during the optimization for GradPath, and Fig. \ref{append_steps}b shows IG\textsuperscript{2} attributions w.r.t. different step sizes and numbers. 
\par We choose a moderate step size that can deal well with the trade-off between attribution noise and completeness. The specific hyper-parameters about step size and number for different datasets are reported in Table \ref{hp_ig2}.

\subsection{Normalization}
\label{append_normal}
\par We consider two normalization methods: $\ell_2$ norm and $\ell_1$ norm. Given the gradient $g_\alpha=\frac{\partial \Vert \tilde{f}(\gamma^{G}(\alpha)) - \tilde{f}(\alpha) \Vert_2 }{ \partial \gamma^{G}(\alpha)}$ at step $\alpha$, we can have $W^{\ell_2}_\alpha$ and $W^{\ell_1}_\alpha$:

\par \textbf{$\ell_2$ norm} is simple:
\begin{align}
  W^{\ell_2}_\alpha = \Vert g_\alpha \Vert_2,
\end{align}
which is used in previous works about adversarial robustness~\cite{madry2018towards,ilyas2019adversarial}.
\par \textbf{$\ell_1$ norm} is more tricky. Following a sparse adversarial attack work~\cite{sparseattack}, an additional parameter $k$ that controls the sparsity of gradient on each step is introduced. Specifically, we select the features $i$ with top $k$ maximal absolute gradient value and then normalize the sparse vector with sign function:
\begin{align}
  \label{l1norm}
  W^{\ell_1}_\alpha(i) =\begin{cases}
                          \vert S \vert g(i),\  i \in \mathcal{S}=\underset{\vert A \vert=k}{\mathrm{argmax}\ \Sigma_{i\in A} \vert g(i) \vert} \\
                          0
                        \end{cases},
\end{align}
where $S$ is the set contains the index of top $k$ maximal absolute value in vector $g$ and the set size $\vert S\vert$ is the $\ell_1$ norm of signed vector.
\par Both $l_1$ and $l_2$ normalization imply the feature importance on each step perturbation, the comparison of which is presented below. In practice, we use more general $\ell_2$ normalization by default.

We compare $\ell_2$ and $\ell_1$ normalization for the gradient of GradPath (i.e., $W$). $\ell_1$ normalization has an additional parameter to control sparsity, which is presented by the percentile of the gradient vector. Fig. \ref{l1l2_comparison} plots the IG\textsuperscript{2} attributions on wafer map under different normalizations. When the percentile is low (90), the attribution under $\ell_1$ normalization resembles $\ell_2$. When the attribution sparsity increases, the noises significantly drop, but some attributions on relevant features also get very small, which especially damages the performance of certain classes (e.g., Donut, Near-full, and Random). This can be regarded as a precision and recall trade-off.

\begin{table}[h]
  \centering

  \caption{IG\textsuperscript{2} hyper-parameters for different datasets }
  \begin{tabular}{ccccc}
  \toprule
  Datasets & Input region & Step size   & Steps & Ref.         \\ \midrule
  ImageNet & [0,255] &1024& 500  & random (trick) \\
  TREC & [-1.156, 1.094] & 0.01 & 1000  & random  \\ 
  wafer map& [0,1] & 0.02& 2000 & the normal \\\bottomrule
  \end{tabular}
  \label{hp_ig2}
  \end{table}
  \begin{figure}[!t]
    \centering

    \includegraphics[width=0.40\textwidth]{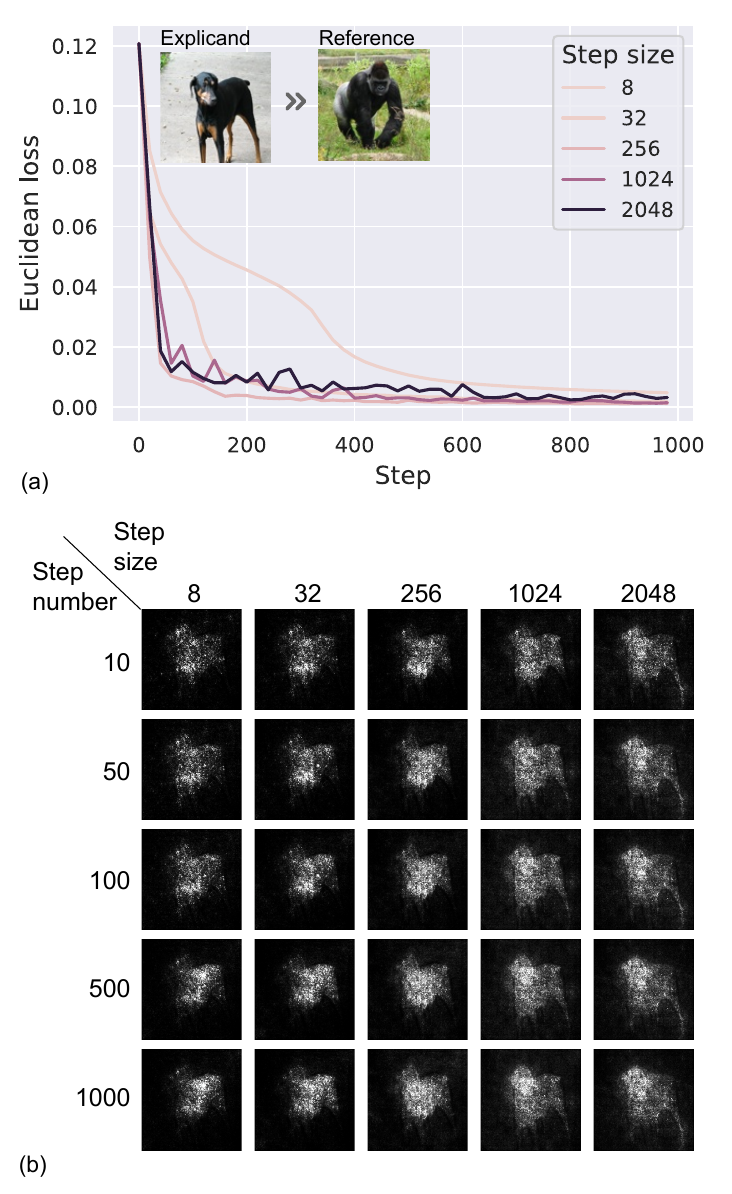}
    \caption{\textbf{(a) Curves of loss w.r.t. step numbers} for different step sizes. \textbf{(b) IG\textsuperscript{2} attributions w.r.t. step sizes and numbers.} [Doberman] is explained with reference [gorilla].}
    \label{append_steps}
  \end{figure}

\begin{figure}[]

  \centering
  \includegraphics[width=0.45\textwidth]{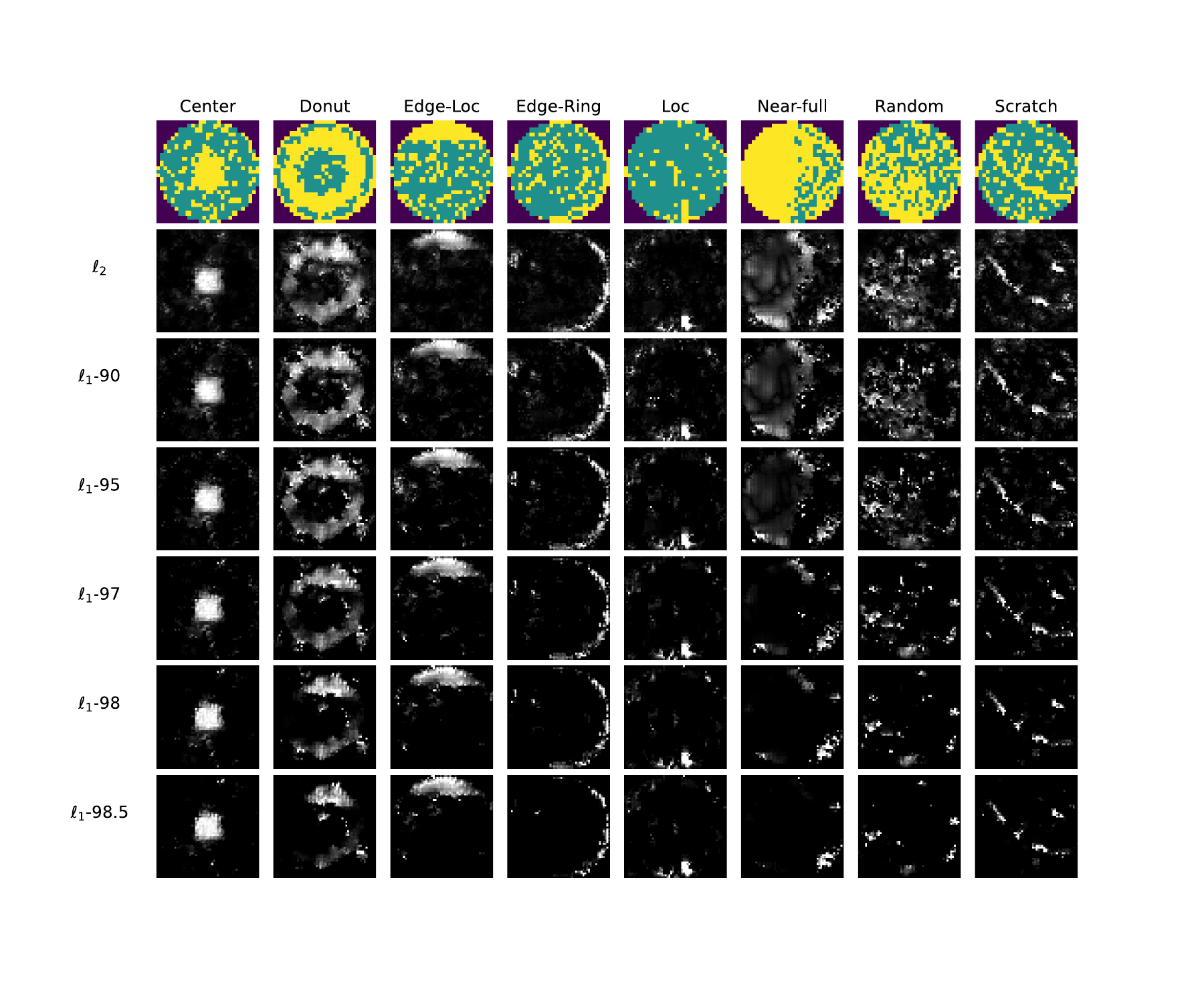}
  \caption{\textbf{IG\textsuperscript{2} attributions of $\ell_1$ and $\ell_2$ normalization.} Five sparsity percentile parameters (90, 95, 97, 98, 98.5) of $\ell_1$ normalization are compared. }
  \label{l1l2_comparison}
\end{figure}

\begin{figure}[]

  \centering
  \includegraphics[width=0.45\textwidth]{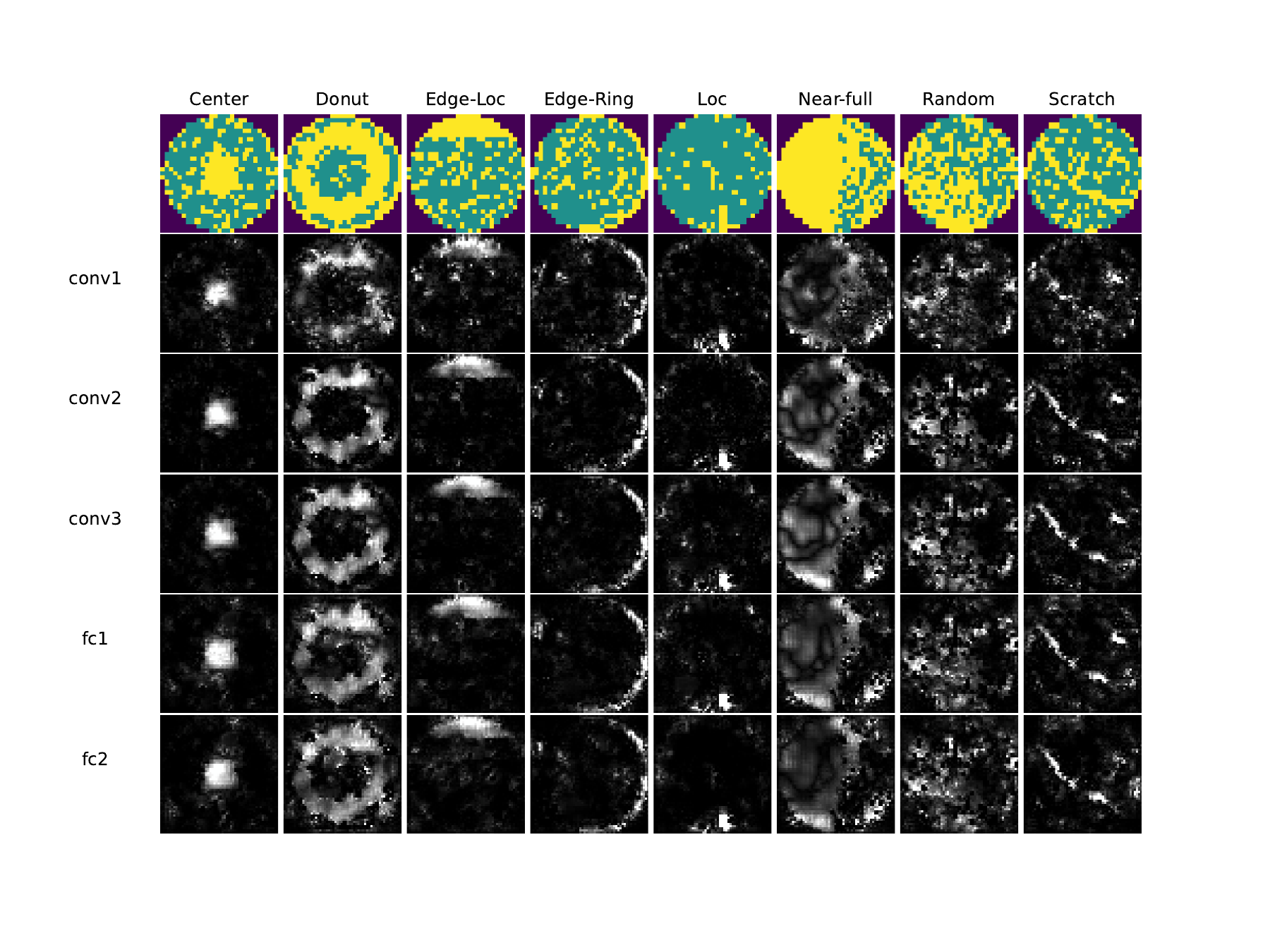}
  \caption{\textbf{IG\textsuperscript{2} with the different representation layer} (fc1 used in the paper). }
  \label{rep_ig2}
\end{figure}
\subsection{Representation layer}
\label{append_rep}
We compare the IG\textsuperscript{2} with the different representation layers of wafer map classifier (Table \ref{wafermap_cnn}), which is shown in Fig. \ref{rep_ig2}. GradCFE gets less noise using the deeper layer as representation, while IG\textsuperscript{2} is slightly affected by the choice of representation layer.

\subsection{Representation distance measure}
\label{append_repsim}
We compare the representation distance measures of IG\textsuperscript{2}. Fig. \ref{rep_sim} shows the ImageNet sample attributions by three measures, as an intuitive illustration of main paper Table \ref{similar_quant}. Generally, IG\textsuperscript{2} with Euclidean distance and Cosine similarity provide very similar feature attributions, while $\ell_l$ norm will introduce unpleasant noises.

\begin{table}[!h]
  \centering
  \caption{Evaluation of model representation distance measures (the gap to Euclidean distance)}
  \label{similar_quant}
  \begin{threeparttable}

    \begin{tabular}{@{}llll|ll@{}}
      \toprule
                                &               & \multicolumn{2}{c|}{Ground truth} & \multicolumn{2}{c}{SIC AUC}                                     \\
      Datasets                  & Measures      & AUC $\uparrow$                    & SUM $\uparrow$              & ADD $\uparrow$ & DEL $\downarrow$ \\ \midrule
      \multirow{2}{*}{ImageNet} & Cosine        & $-0.055$                     & $-0.021$               & $+0.018$  & $+0.047$    \\
                                & $\ell_1$ norm & $-0.065$                    & $-0.027$               & $-0.028$  & $+0.003$    \\ \midrule
      \multirow{2}{*}{TREC}   & Cosine          &\multicolumn{2}{c|}{\multirow{2}{*}{---}}   & $+0.009$          & $+0.007$            \\
                                & $\ell_1$ norm &    \multicolumn{2}{c|}{}                   & $-0.010$          & $+0.012$            \\ \midrule
      \multirow{2}{*}{\shortstack{Wafer \\map}}      & Cosine             & $-0.024$                             & $-0.019$                       & $-0.010$          & $-0.002$            \\
                                & $\ell_1$ norm & $-0.072$                             & $-0.037$                       & $-0.029$          & $+0.016$            \\ \bottomrule
    \end{tabular}
  \end{threeparttable}
\end{table}

\begin{figure}[!h]
  \setlength{\belowcaptionskip}{-0.4cm}
  \centering
  \includegraphics[width=0.49\textwidth]{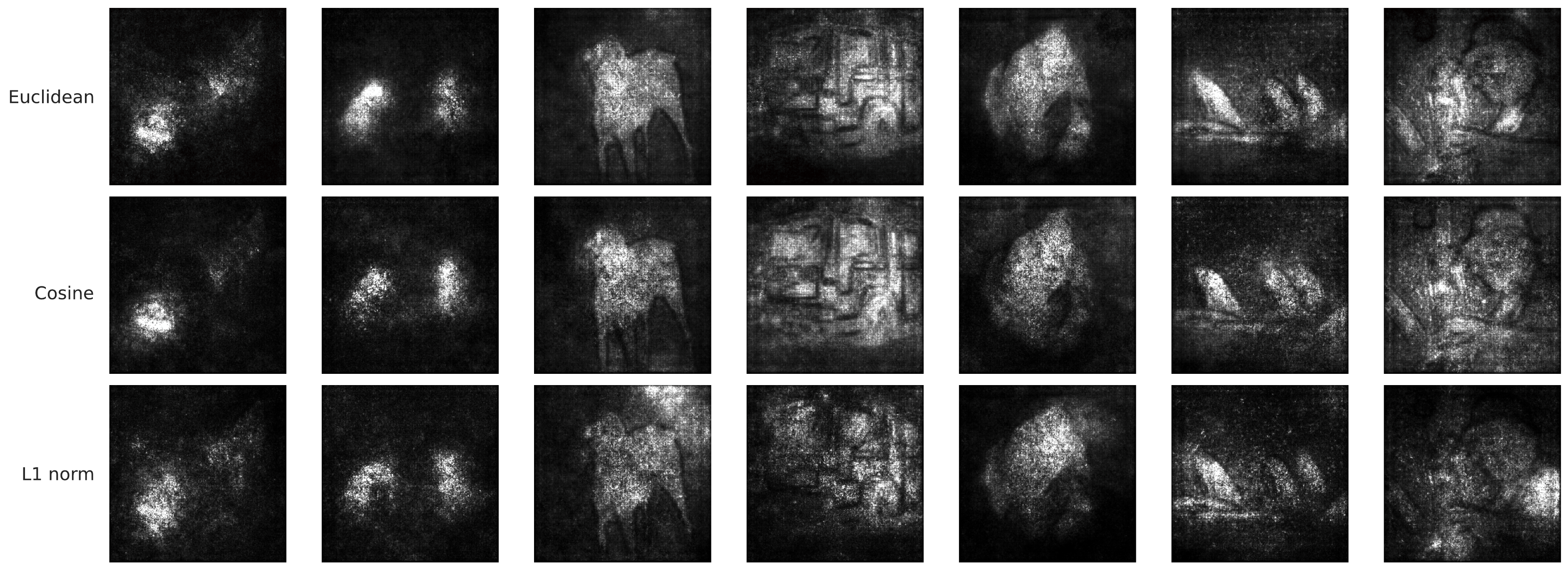}
  \caption{\textbf{IG2 attributions of Euclidean, Cosine, and $\ell_1$ norm measures} (Euclidean distance used in the paper). }
  \label{rep_sim}
\end{figure}

\subsection{Computational cost}
\label{append_compute_cost_sec}
\par Table \ref{append_compute_time} reports the average computation time per explained sample. For a fair comparison, all the methods run with the same step number and batch size (if available). We run the experiments with 1 NVIDIA GeForce GTX 1080 Ti GPU. The KernelSHAP applies the superpixel technique for ImageNet and wafer map datasets.
\begin{table}[t]
  \centering

  \caption{Average explanation time per sample (sec.). The input feature numbers of KernelSHAP are reduced. }
  \label{append_compute_time}
  \begin{tabular}{@{}lccc@{}}
  \toprule
           & ImageNet & TREC  & wafermap \\ \midrule
  Gradient        & 0.13     & 0.003 & 0.03     \\
  Vanilla IG       & 8.14     & 0.26  & 0.89     \\
  Guided IG         & 37.0     & 4.48  & 6.39     \\
  Expected IG      & 8.34     & 0.49  & 1.25     \\
  DeepSHAP          & 111.0      & 0.32  &  0.95    \\ 
  KernelSHAP           & 3150.2 ($13\times13$) & 1.8  &  75.0 ($28\times28$)    \\ 
  IG\textsuperscript{2}       & 108.2    & 9.80  & 18.3     \\ 
  \bottomrule
  \end{tabular}
  \end{table}

\begin{table}[t]
  \centering
  \caption{The structure of MLP to be explained for the regression on the synthetic dataset from XAI-BENCH }
  \begin{tabular}{|c|c|}
  \hline
  \textbf{Layer} & \textbf{Configuration}            \\ \hline
  input & $5$-Dimension vector       \\ \hline
  fc1 & fc\textsubscript{5,64}+BN+Tanh  \\ \hline
  fc2 & conv\textsubscript{64,16}+Tanh  \\ \hline
  fc3 & conv\textsubscript{16,1} \\ \hline
  \end{tabular}
  \label{XAI_MLP}

  \end{table}

  \begin{figure}[!h]
        \centering
        \includegraphics[width=0.45\textwidth]{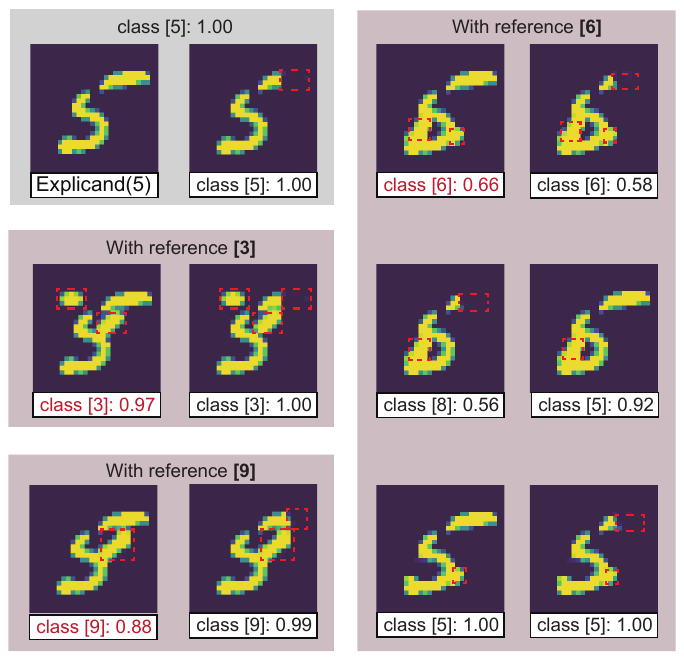}
        \caption{\textbf{Ablation study for critical areas} in the explained digital 5 with references of digitals 3, 6, 9. The titles of each subfigure report the predicted class with Softmax output. The areas in the first figure of each block (with red titles) are used in main paper Fig. \ref{mnist_example}.  }
        \label{mnist_ablation}
      \end{figure}

  \section{Datasets and explained models}
  \label{append_datamodel}
  \subsection{MNIST classification}
  \label{append_MNIST}
  \par Fig. \ref{mnist_ablation} validates the most critical areas in digital 5 that distinguishes the explicand to references of digitals 3, 6, 9. We compare the areas highlighted by IG\textsuperscript{2} (the areas in the figures with red titles) and the areas of Expected IG (the right upper corners). We study the importance of different areas by the permutations: We first remove the right upper pixels of digital 5, which does not change model prediction. If we fill the highlighted areas of IG\textsuperscript{2} with pixels, the model predicts the modified digitals as the class labels of references. Based on this, if we further remove the right upper corners of these digitals, the impact on model predictions is not significant.
  \par In sum, compared with the right upper corners of Expected IG, the areas highlighted by IG\textsuperscript{2} are more critical for distinguishing the explicand to references. Besides, we believe areas of IG\textsuperscript{2} are also closer to the intuition. 

  \begin{figure}[t]
    \setlength{\belowcaptionskip}{-0.1cm}
    \centering
    \includegraphics[width=0.48\textwidth]{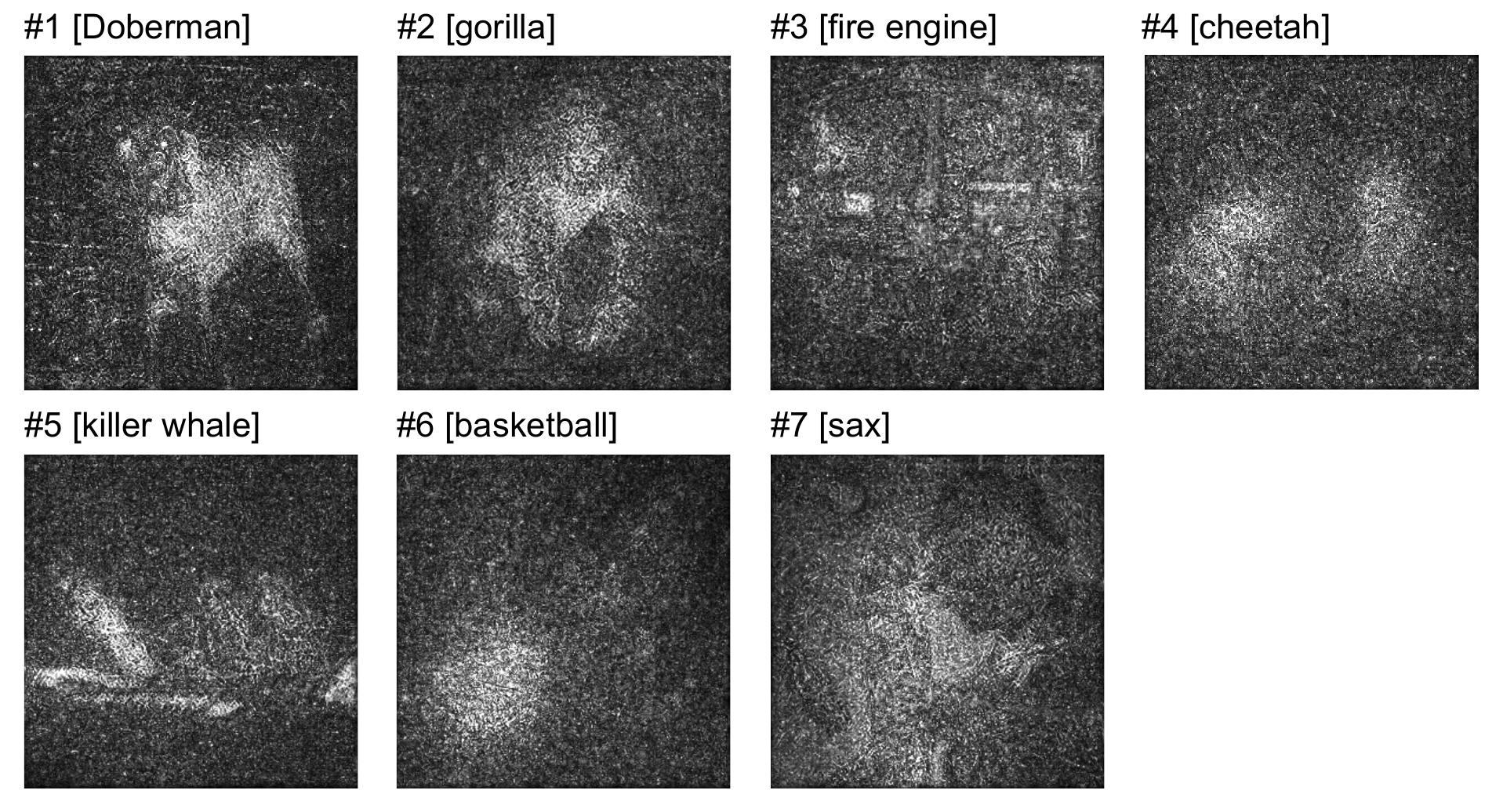}
    \caption{\textbf{GradCFE for the ImageNet images in main paper Fig. \ref{imagenet_explain},} showing the general counterfactual directions of GradPath (from the explicand to its GradCF).}
    \label{imagenet_gracfe}
  \end{figure}
  
  \begin{figure}[!h]
    \centering
    \includegraphics[width=0.48\textwidth]{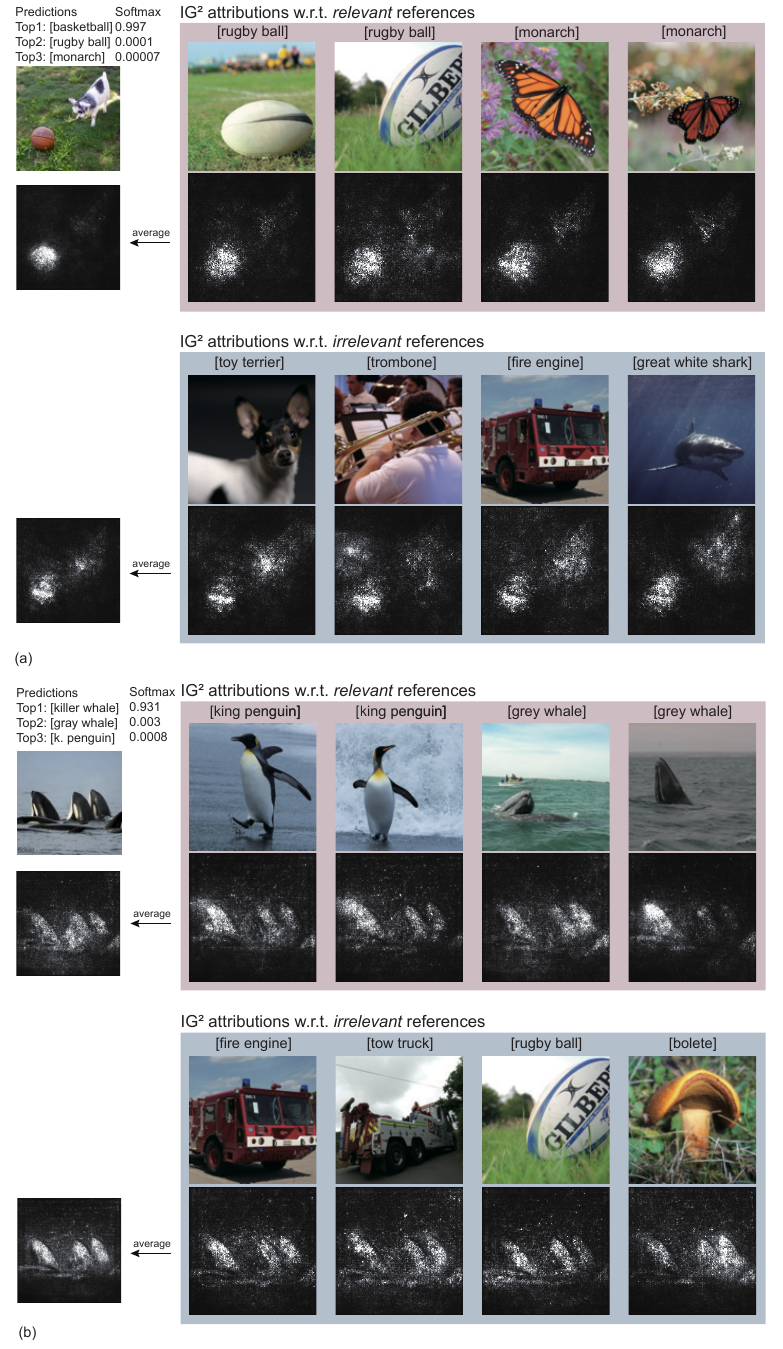}
    \caption{\textbf{Feature attributions with regard to different references,} as a supplement for main paper Fig. \ref{imagenet_refs}. \textbf{(a) Explained image of class [basketball]} with interference object ``dog''. \textbf{(b) Explained image of class [killer whale]} on pure background. }
    \label{append_imagenet_refs}
  \end{figure}

  \subsection{Synthetic dataset in XAI-BENCH}
  \label{append_datamodel_xai}
  \par We synthesize the dataset $x$ from the 5-dimension Gaussian distribution with $\mathbf{\mu}=\vec{0}$ and $\mathbf{\Sigma}=I_5$. The additive piecewise function for labels are~\cite{XAI_bench}:
  \begin{align}
    \psi_1(x_1) &= 1\ \text{if}\ x_1 \ge 0,\ \text{otherwise}\ -1 \notag ,\\
    \psi_2(x_2) &= \begin{cases}
      -2&,\ x_2<-0.5\\
      -1&,\ -0.5 \le x_2 <0 \\
      1&,\ 0 \le x_2 < 0.5 \\
      2&,\ x_2 \ge 0.5
    \end{cases}, \notag \\
    \psi_3(x_3) &=  \lfloor 2cos(\pi x_3) \rfloor , \notag \\
    \psi_i(x_i) &= 0,\ i=4,5 ,
  \end{align}
  and we cut off the normalized function output with a threshold to obtain the binary label:
  \begin{align}
    y = \begin{cases}
      1&,\ \text{if}\ \mathrm{Norm}[\Sigma_{i=1}^5 \psi_i]  > 0\\
      0&,\ \text{if}\ \mathrm{Norm}[\Sigma_{i=1}^5 \psi_i ]\le 0
    \end{cases},
  \end{align}
  where $\mathrm{Norm}[\ ]$ makes the mean value of outputs zero.
  \par The structure Multi-Layer Perceptron (MLP) trained on the synthetic dataset is reported in Table \ref{XAI_MLP}.

  \begin{figure}[t]
    \centering
    \includegraphics[width=0.48\textwidth]{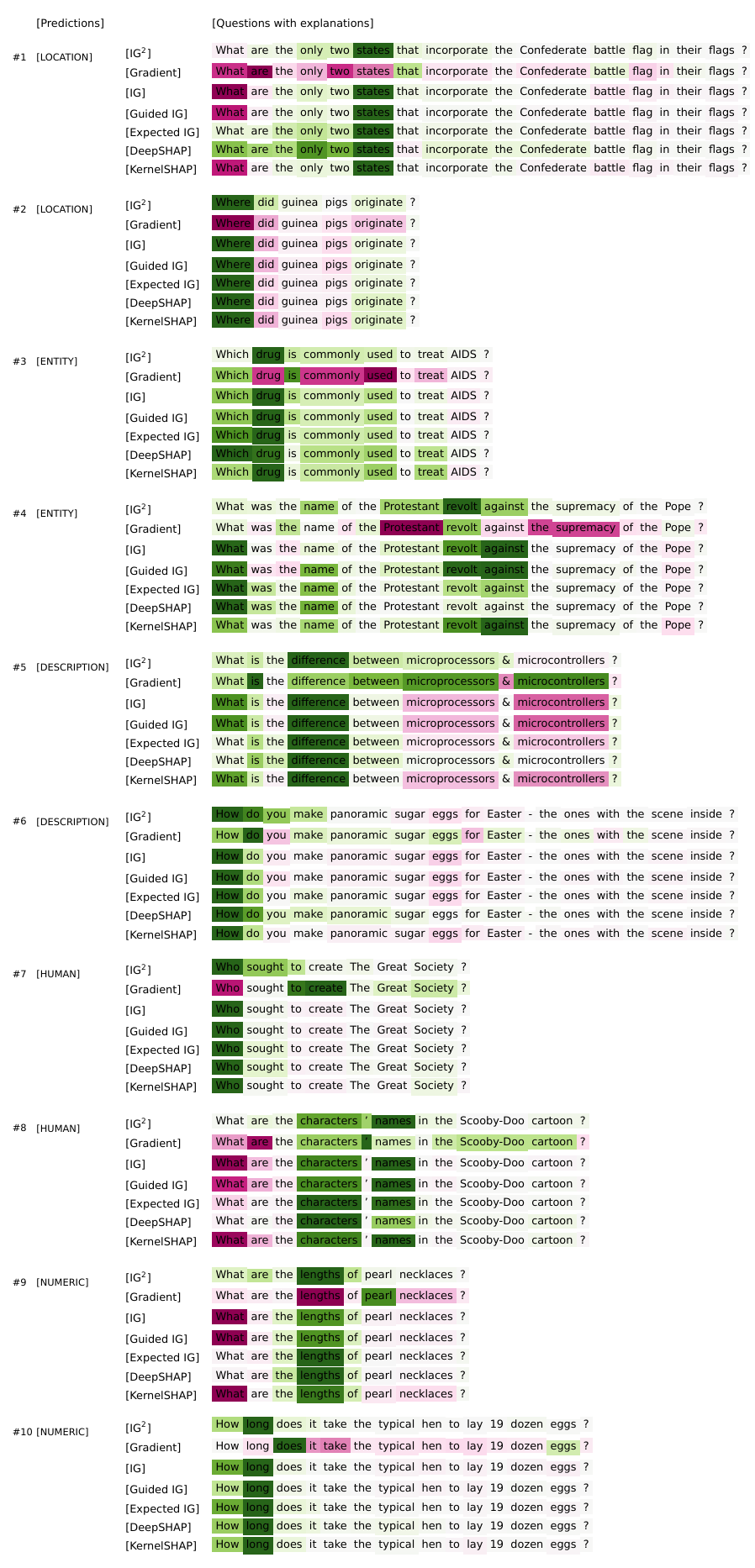}
    \caption{\textbf{Word attributions of all compared methods for TREC dataset,} as a complement for main paper Fig. \ref{trec_explain}.}
    \label{trec_explain_supply}
  \end{figure}

  \subsection{ImageNet classification}

  \par Fig. \ref{imagenet_gracfe} shows the GradCFE for the explicand displayed in main paper Fig. \ref{imagenet_explain}. GradCFE presents the difference between the explicand and its baseline, GradCF. Fig. \ref{imagenet_gracfe} illustrates that the integration path of IG\textsuperscript{2} is in the direction of critical features, which can mitigate the saturation effects.
  \par As a supplement for main paper Fig. \ref{imagenet_refs}, Fig. \ref{append_imagenet_refs} shows the feature attributions with regard to different categories of references for two more ImageNet samples. This supports the analyses of reference impact and choice trick.

  \subsection{TREC classification}
    \par Fig. \ref{trec_explain_supply} shows the complete word attribution results of all compared methods. The results are consistent with the conclusions made in the main paper that IG\textsuperscript{2} makes less attributions on weak interrogative words and more attributions on critical phrases.

  \subsection{Wafermap failure pattern classification}
  \label{append_datamodel_wafer}
  Fig. \ref{Wafermap_overall} plots the wafer maps sampled from WM-811K dataset~\cite{6932449}, including three images per class. To achieve the task of failure pattern classification, we select a small subset of WM-811K dataset with a total of 19000 wafer maps. The backbone of the classification neural network is based on convolution, the detailed structure of which is reported in Table \ref{wafermap_cnn}.
  
  \begin{figure}[t]
    \centering
    \includegraphics[width=0.48\textwidth]{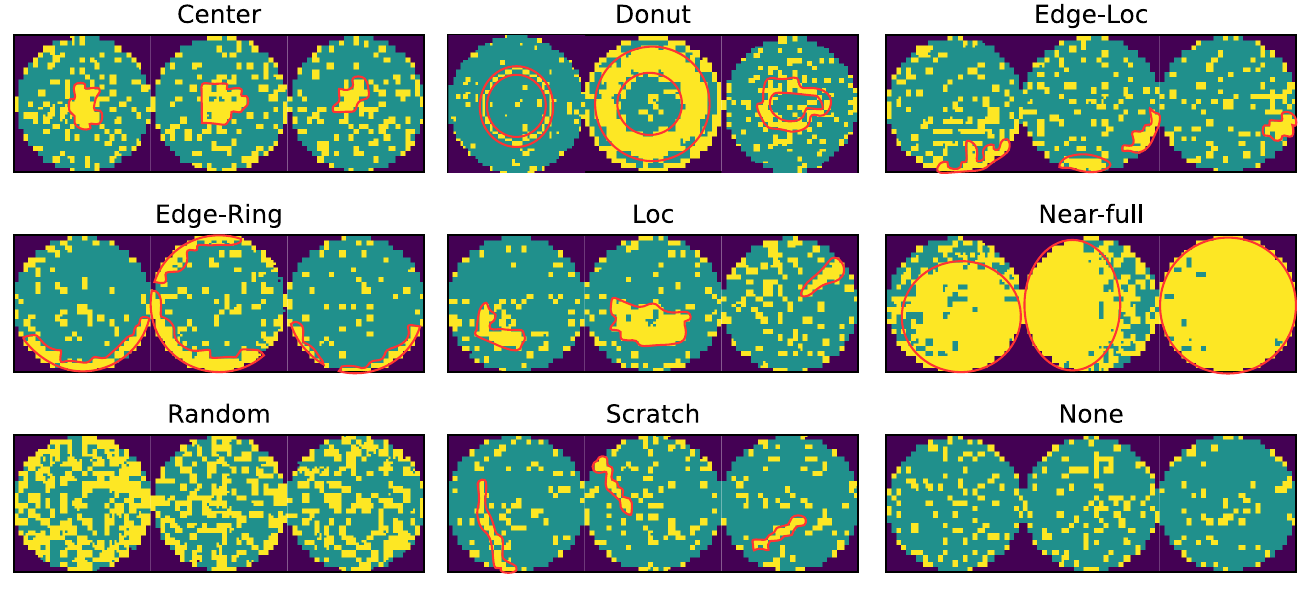}
    \caption{The sampled wafer map images from WM-811K dataset, including eight failure patterns and one without any pattern: [Center], [Donut], [Edge-Loc], [Edge-Ring], [Loc], [Near-full], [Random], [Scratch] and [None]. The failure cause position (pixels) are also annotated in the red border area (except [Random] and [None]). }
    \label{Wafermap_overall}
  \end{figure}
  \begin{table}[t]
    \centering
    \caption{The structure of CNN to be explained for wafer map failure pattern classification }
    \begin{tabular}{|c|c|}
    \hline
    \textbf{Layer} & \textbf{Configuration}            \\ \hline
    input & $56\times 56\times 3$ RGB images       \\ \hline
    conv1 & conv\textsubscript{3,32,3}+ReLU+Pool  \\ \hline
    conv2 & conv\textsubscript{32,64,3}+ReLU+Pool  \\ \hline
    conv3 & conv\textsubscript{64,128,3}+ReLU+Pool \\ \hline
    fc1   & fc\textsubscript{8192,1250}+ReLU      \\ \hline
    fc2   & fc\textsubscript{1250,9}+Softmax      \\ \hline
    \end{tabular}
    \label{wafermap_cnn}
    \end{table}

  \subsection{CelebA face attribute classification}
  \label{append_clelba}
  Face attribute classification in CelebA is a multi-label task. There are totally 40 face attributes with binary labels. The classification accuracy for each face attributes are reported in Table \ref{celebA_acc}.
  \begin{table}[!h]
    \centering
    \caption{Classification accuracy on each face attribute of CelebA  }
    \begin{tabular}{lclc}
    \toprule
    \textbf{Attributes} & \textbf{Acc.} (\%) & \textbf{Attributes}   & \textbf{Acc.} (\%) \\ \midrule
    5\_o\_Clock\_Shadow & 94.4          & Male                  & 97.9          \\
    Arched\_Eyebrows    & 83.4          & Mouth\_Slightly\_Open & 93.6          \\
    Attractive          & 82.5          & Mustache              & 96.9          \\
    Bags\_Under\_Eyes   & 85.0          & Narrow\_Eyes          & 87.7          \\
    Bald                & 98.9          & No\_Beard             & 96.0          \\
    Bangs               & 95.9          & Oval\_Face            & 75.4          \\
    Big\_Lips           & 71.3          & Pale\_Skin            & 97.0          \\
    Big\_Nose           & 84.6          & Pointy\_Nose          & 77.1          \\
    Black\_Hair         & 89.0          & Receding\_Hairline    & 93.7          \\
    Blond\_Hair         & 95.6          & Rosy\_Cheeks          & 94.9          \\
    Blurry              & 95.9          & Sideburns             & 97.7          \\
    Brown\_Hair         & 88.2          & Smiling               & 92.7          \\
    Bushy\_Eyebrows     & 92.6          & Straight\_Hair        & 83.4          \\
    Chubby              & 95.3          & Wavy\_Hair            & 83.4          \\
    Double\_Chin        & 96.2          & Wearing\_Earrings     & 90.4          \\
    Eyeglasses          & 99.6          & Wearing\_Hat          & 99.0          \\
    Goatee              & 97.5          & Wearing\_Lipstick     & 93.7          \\
    Gray\_Hair          & 98.3          & Wearing\_Necklace     & 87.5          \\
    Heavy\_Makeup       & 91.3          & Wearing\_Necktie      & 96.7          \\
    High\_Cheekbones    & 86.8          & Young                 & 88.2          \\ \midrule
                        &                   & Average               & 91.0          \\ \bottomrule
    \end{tabular}
    \label{celebA_acc}
    \end{table}
\par For the ground truth in quantitative evaluation of celebA attribution (in main paper Table \ref{metrics}), we utilize the face parsing model pretrained on CelebAMask-HQ~\cite{CelebAMask-HQ} to split the face image into different facial components, such as hair, nose and eyes. The ground truth is chosen as the facial components that are most related to the explained face attribute. Table \ref{celebA_corr} gives the correspondences between explained face attributes (of CelebA) and parsed facial components (of CelebAMask-HQ). In quantitative evaluation, we only consider the face attributes that have explicit corresponding components.
\begin{table}[!h]
  \centering
  \caption{Correspondences between explained face attributes and parsed facial components}
  \begin{tabular}{lclc}
    \toprule
    \textbf{Attributes} & \textbf{Comp.}  & \textbf{Attributes}   & \textbf{Comp.} \\ \midrule
    5\_o\_Clock\_Shadow & -                     & Male                  & -                     \\
    Arched\_Eyebrows    & brows               & Mouth\_Slightly\_Open & mouth                     \\
    Attractive          & -                        & Mustache              & -                     \\
    Bags\_Under\_Eyes   & eyes                & Narrow\_Eyes          & eyes                     \\
    Bald                & hair                     & No\_Beard             & -                     \\
    Bangs               & hair                     & Oval\_Face            & -                     \\
    Big\_Lips           & lips                & Pale\_Skin            & skin                     \\
    Big\_Nose           & nose                     & Pointy\_Nose          & nose                     \\
    Black\_Hair         & hair                     & Receding\_Hairline    & hair                     \\
    Blond\_Hair         & hair                     & Rosy\_Cheeks          & -                     \\
    Blurry              & -                        & Sideburns             & hair                     \\
    Brown\_Hair         & hair                     & Smiling               & mouth                     \\
    Bushy\_Eyebrows     & brows               & Straight\_Hair        & hair                     \\
    Chubby              & -                        & Wavy\_Hair            & hair                     \\
    Double\_Chin        & -                     & Wearing\_Earrings     & ear\_r                     \\
    Eyeglasses          & eye\_g                    & Wearing\_Hat          & hat                     \\
    Goatee              & -                     & Wearing\_Lipstick     & lips                     \\
    Gray\_Hair          & hair                     & Wearing\_Necklace     & neck\_l                     \\
    Heavy\_Makeup       & -                        & Wearing\_Necktie      & cloth                     \\
    High\_Cheekbones    & -                     & Young                 & -                    \\ \bottomrule
  \end{tabular}
  \label{celebA_corr}
\end{table}
\end{document}